\newcommand{\alginit}{\SetCommentSty{small}\DontPrintSemicolon}
\newcommand{\note}[1]{}
\renewcommand{\note}[1]{~\\\frame{\begin{minipage}[c]{\columnwidth}\vspace{2pt}\center{#1}\vspace{2pt}\end{minipage}}\vspace{3pt}\\}
\newcommand{\OPT}{\text{OPT}}
\newcommand{\E}{\mathrm{E}}
\newcommand{\eps}{\varepsilon}
\newcommand{\ignore}[1]{}
\newtheorem{theorem}{Theorem}[section]
\newtheorem{lemma}[theorem]{Lemma}
\newtheorem{claim}[theorem]{Claim}
\newtheorem{definition}[theorem]{Definition}
\newtheorem{example}[theorem]{Example}
\newcommand{\run}{{\textsc{Run}}}
\DeclareMathOperator*{\argmin}{arg\,min}
\DeclareMathOperator*{\argmax}{arg\,max}
\begin{document}

\title{Procrastinating with Confidence: Near-Optimal, Anytime, Adaptive Algorithm Configuration}
\author{Robert Kleinberg\\Department of Computer Science\\Cornell University\\\texttt{rdk@cs.cornell.edu}
\And
Kevin Leyton-Brown\\Department of Computer Science\\University of British Columbia\\\texttt{kevinlb@cs.ubc.ca}
\And
Brendan Lucier\\Microsoft Research\\\texttt{brlucier@microsoft.com}
\And
Devon Graham\\Department of Computer Science\\University of British Columbia\\\texttt{drgraham@cs.ubc.ca}}

\maketitle

\begin{abstract}
Algorithm configuration methods optimize the performance of a parameterized heuristic algorithm on a given distribution of problem instances. Recent work introduced an algorithm configuration procedure (``Structured Procrastination'') that provably achieves near optimal performance with high probability and with nearly minimal runtime in the worst case. It also offers an \emph{anytime} property: it keeps tightening its optimality guarantees the longer it is run. Unfortunately, Structured Procrastination is not \emph{adaptive} to characteristics of the parameterized algorithm: it treats every input like the worst case. Follow-up work (``LeapsAndBounds'') achieves adaptivity but trades away the anytime property. This paper introduces a new algorithm, ``Structured Procrastination with Confidence'', that preserves the near-optimality and anytime properties of Structured Procrastination while adding adaptivity. In particular, the new algorithm will perform dramatically faster in settings where many algorithm configurations perform poorly. We show empirically both that such settings arise frequently in practice and that the anytime property is useful for finding good configurations quickly.
\end{abstract}

\section{Introduction}

Algorithm configuration is the task of searching a space of \emph{configurations} of a given algorithm (typically represented as joint assignments to a set of algorithm parameters) in order to find a single configuration that optimizes a performance objective on a given distribution of inputs. In this paper, we focus exclusively on the objective of minimizing average runtime. Considerable progress has recently been made on solving this problem in practice via general-purpose, heuristic techniques such as ParamILS \citep{hutter-aaai07a,hutter-jair09a},
GGA \citep{ansotegui-cp09a,ansotegui-ijcai15a},
irace \citep{birattari-gecco02a,lopez-ibanez-tech11a} and
SMAC \citep{hutter-bayesopt11,hutter-lion11a}. Notably, in the context of this paper, all these methods are \emph{adaptive}: they surpass their worst-case performance when presented with ``easier'' search problems.

Recently, algorithm configuration has also begun to attract theoretical analysis.
While there is a large body of less-closely related work that we survey in Section~\ref{sec:related}, the first nontrivial worst-case performance guarantees for general algorithm configuration with an average runtime minimization objective were achieved by a recently introduced algorithm called \emph{Structured Procrastination (SP)} \citep{ijcai17}.
This work considered a worst-case setting in which an adversary causes every deterministic choice to play out as poorly as possible, but where observations of random variables are unbiased samples. It is straightforward to argue that, in this setting, 
any fixed, deterministic heuristic for searching the space of configurations
can be extremely unhelpful. The work therefore focuses on obtaining candidate configurations via random sampling (rather than, e.g., following gradients or taking the advice of a response surface model).  Besides its use of heuristics, SMAC also devotes half its runtime to random sampling. Any method based on random sampling will eventually encounter the optimal configuration; the crucial question is the amount of time that this will take. The key result of \citet{ijcai17} is that SP is guaranteed to find a near-optimal configuration with high probability, with worst-case running time that nearly matches a lower bound on what is possible and that asymptotically dominates that of existing alternatives such as SMAC.

Unfortunately, there is a fly in the ointment: SP turns out to be impractical in many cases, taking an extremely long time to run even on inputs that existing methods find easy. At the root, the issue is that SP treats every instance like the worst case, in which it is necessary to achieve a fine-grained understanding of every configuration's runtime in order to distinguish between them.  For example, if every configuration is very similar but most are not quite $\eps$-optimal, subtle performance differences must be identified. SP thus runs every configuration enough times that with high probability the configuration's runtime can accurately be estimated to within a $1+\eps$ factor.

\subsection{\textsc{LeapsAndBounds} and \textsc{CapsAndRuns}}
\label{sec:beyond}

\citet{weisz2018leapsandbounds} introduced a new algorithm, \textsc{LeapsAndBounds (LB)}, that improves upon Structured Procrastination in several ways. First, LB improves upon SP's worst-case performance, matching its information-theoretic lower bound on running time by eliminating a log factor. Second, LB does not require the user to specify a runtime cap that they would never be willing to exceed on any run, replacing this term in the analysis with the runtime of the optimal configuration, which is typically much smaller. Third, and most relevant to our work here, LB includes an adaptive mechanism, which takes advantage of the fact that when a configuration exhibits low variance across instances, its performance can be estimated accurately with a smaller number of samples. However, the easiest algorithm configuration problems are probably those in which a few  configurations are much faster on average than all other configurations.  (Empirically, many algorithm configuration instances exhibit just such non-worst-case behaviour; see our empirical investigation in the Supplementary Materials.) In such cases, it is clearly unnecessary to obtain high-precision estimates of each bad configuration's runtime; instead, we only need to separate these configurations' runtimes from that of the best alternative. LB offers no explicit mechanism for doing this. LB also has a key disadvantage when compared to SP: it is not anytime, but instead must be given fixed values of $\eps$ and $\delta$. Because LB is adaptive, there is no way for a user to anticipate the amount of time that will be required to prove $(\eps,\delta)$-optimality, forcing a tradeoff between the risks of wasting available compute resources and of having to terminate LB before it returns an answer.

\textsc{CapsAndRuns (CR)} is a refinement of LB that was developed concurrently with the current paper; it has not been formally published, but was presented at an ICML 2018 workshop \citep{weisz2018capsandruns}.  CR maintains all of the benefits of LB, and furthermore introduces a second adaptive mechanism that does exploit variation in configurations' mean runtimes. Like LB, it is not anytime.

\subsection{Our Contributions}

Our main contribution is a refined version of SP that maintains the anytime property while aiming to observe only as many samples as necessary to separate the runtime of each configuration from that of the best alternative.  We call it ``Structured Procrastination with Confidence'' (SPC). SPC differs from SP in that it maintains a novel form of lower confidence bound as an indicator of the quality of a particular configuration, while SP simply uses that configuration's sample mean. The consequence is that SPC spends much less time running poorly performing configurations, as other configurations quickly appear better and receive more attention. We initialize each lower bound with a trivial value: each configuration's runtime is bounded below by the fastest possible runtime, $\kappa_0$. SPC then repeatedly evaluates the configuration that has the most promising lower bound.\footnote{While both SPC and CR use confidence bounds to guide search, they take different approaches. Rather than rejecting configurations whose lower bounds get too large, SPC focuses on configurations with small lower bounds. By allocating a greater proportion of total runtime to such promising configurations we both improve the bounds for configurations about which we are more uncertain and allot more resources to configurations with relatively low mean runtimes about which we are more confident.} We perform these runs by ``capping'' (censoring) runs at progressively doubling multiples of $\kappa_0$.  If a run does not complete, SPC ``procrastinates'', deferring it until it has exhausted all runs with shorter captimes.  Eventually, SPC observes enough completed runs of some configuration to obtain a nontrivial upper bound on its runtime.  At this point, it is able to start drawing high-probability conclusions that other configurations are worse.


Our paper is focused on a theoretical analysis of SPC. We show that it identifies an approximately optimal configuration using running time that is nearly the best possible in the worst case; however, so does SP. The key difference, and the subject of our main theorem, is that SPC also exhibits near-minimal runtime beyond the worst case, in the following sense. Define an $(\eps,\delta)$-suboptimal configuration to be one whose average runtime exceeds that of the optimal configuration by a factor of  more than $1+\eps$, even when the suboptimal configuration's runs are capped so that a $\delta$ fraction of them fail to finish within the time limit. A straightforward information-theoretic argument shows that in order to verify that a configuration is $(\eps,\delta)$-suboptimal it is sufficient---and may also be necessary, in the worst case---to run it for $O(\eps^{-2} \cdot \delta^{-1} \cdot \OPT)$ time. The running time of SPC matches (up to logarithmic factors) the running time of a hypothetical ``optimality verification procedure'' that knows the identity of the optimal configuration, and for each suboptimal configuration $i$ knows a pair $(\eps_i,\delta_i)$ such that $i$ is $(\eps_i,\delta_i)$-suboptimal and the product $\eps_i^{-2} \cdot \delta_i^{-1}$ is as small as possible. 

SPC is anytime in the sense that it first identifies an  $(\eps,\delta)$-optimal configuration for large values of $\eps$ and $\delta$ and then continues to refine these values as long as it is allowed to run. This is helpful for users who have difficulty setting these parameters up front, as already discussed. SPC's strategy for progressing iteratively through smaller and smaller values of $\eps$ and $\delta$ also has another advantage: it is actually faster than starting with the ``final'' values of $\eps$ and $\delta$ and applying them to each configuration. This is because extremely weak configurations can be dismissed cheaply based on large  $(\eps, \delta)$ values, instead of taking  more samples  to estimate their runtimes more finely.

\subsection{Other Related Work}
\label{sec:related}

There is a large body of related work in the multi-armed bandits literature, which does not attack quite the same problem but does similarly leverage the ``optimism in the face of uncertainty'' paradigm and many tools of analysis \citep{lai1985asymptotically,auer2002finite,bubeck2012regret}. We do not survey this work in detail as we have little to add to the extensive discussion by \citet{ijcai17}, but we briefly identify some dominant threads in that work. Perhaps the greatest contact between the communities has occurred in the sphere of hyperparameter optimization \citep{bergstra2011algorithms,thornton2013auto,li2016hyperband} and in the literature on bandits with correlated arms that scale to large experimental design settings \citep{kleinberg2006anytime,kleinberg2008multi,chaudhuri2009parameter,bubeck2011x,srinivas2012information,cesa2012combinatorial,munos2014bandits,shahriari2016taking}. In most of this literature, all arms have the same, fixed cost; others \citep{guha2007approximation,tran2012knapsack,badanidiyuru2013bandits} consider a model where costs are variable but always paid in full. (Conversely, in algorithm configuration we can stop runs that exceed a captime, yielding a potentially censored sample at bounded cost.) Some influential departures from this paradigm include \citet{kandasamy2016multi}, \citet{ganchev2010censored}, and most notably \citet{li2016hyperband}; reasons why these methods are nevertheless inappropriate for use in the algorithm configuration setting are discussed at length by \citet{ijcai17}.

Recent work has examined the learning-theoretic foundations of algorithm configuration, inspired in part by an influential paper of \citet{gupta2017pac} that framed algorithm configuration and algorithm selection in terms of learning theory. This vein of work has not aimed at a general-purpose algorithm configuration procedure, as we do here, but has rather sought sample-efficient, special-purpose algorithms for particular classes of problems, including combinatorial partitioning problems (clustering, max-cut, etc) \citep{balcan2017learning}, branching strategies in tree search \citep{balcan2018learning}, and various algorithm selection problems \citep{Vitercik2018}. Nevertheless, this vein of work takes a perspective similar to our own and demonstrates that algorithm configuration has moved decisively from being solely the province of heuristic methods to being a topic for rigorous theoretical study.

\section{Model}

We define an algorithm configuration problem by the 4-tuple $(N, \Gamma, R, \kappa_0)$, where these elements are defined as follows. 
$N$ is a family of (potentially randomized)
algorithms, which we call \emph{configurations} to suggest that a single piece of code instantiates each algorithm under a different parameter setting.  We do not assume that different configurations exhibit any sort of performance correlations, and can so capture the case of $n$ distinct algorithms by imagining a ``master algorithm'' with a single, $n$-valued categorical parameter.
Parameters are allowed to take continuous values: $|N|$ can be uncountable. We typically use $i$ to index configurations.
$\Gamma$ is a probability distribution over input instances.
When the instance distribution is given implicitly by a finite benchmark set, let $\Gamma$ be the uniform distribution over this set.  
We typically use $j$ to index (input instance, random seed) pairs, to which we will hereafter refer simply as instances.
$R(i,j)$ is the execution time when configuration $i \in N$ is run on input instance $j$.  Given some value of $\theta > 0$, we define $R(i,j,\theta) = \min\{R(i,j), \theta\}$, the runtime capped at $\theta$.
$\kappa_0 > 0$ is a constant such that $R(i,j) \geq \kappa_0$ for all configurations $i$ and inputs $j$.

For any timeout threshold $\theta$,
let $R_\theta(i) = \E_{j \sim \Gamma}[R(i,j,\theta)]$ denote the average $\theta$-capped running time of configuration $i$, over input distribution $\Gamma$.
Fixing some running time $\bar{\kappa} = 2^{\beta} \kappa_0$ that we will never be willing to exceed, 
the quantity $R_{\bar{\kappa}}(i)$ corresponds
to the expected running time of configuration $i$ and
will be denoted simply by $R(i)$. We will write $OPT = \min_i R(i)$.
Given $\epsilon > 0$, a goal is to find $i^* \in N$ such that $R(i^*) \leq (1+\epsilon) OPT$. 
We also consider a relaxed objective, where the running time of $i^*$ is {\em capped} at some threshold value $\theta$ for some small fraction of (instance, seed) pairs $\delta$. 
\begin{definition} \label{def:eps-delta-opt}
A configuration $i^*$ is \emph{$(\epsilon,\delta)$-optimal} if there
exists some threshold $\theta$ such that $R_{\theta}(i^*) \leq
(1+\epsilon) OPT$, and 
$\Pr_{j \sim \Gamma} \big( R(i^*,j) > \theta \big) \leq \delta$.  Otherwise, we say $i^*$ is \emph{$(\epsilon, \delta)$-suboptimal}.
\end{definition}

\section{\mbox{Structured Procrastination with Confidence}}
\label{procrast}

In this section we present and analyze our 
algorithm configuration procedure, which is
based on the ``Structured Procrastination''
principle introduced in~\cite{ijcai17}. 
We call the procedure SPC (Structured
Procrastination with Confidence) because, 
compared with the original Structured Procrastination algorithm,
the main innovation is that instead of 
approximating the running time of each
configuration by taking $\widetilde{O}(1/\eps^2)$
samples for some $\eps$, it approximates
it using a lower confidence bound that becomes
progressively tighter as the number of samples
increases. We focus on the case where $N$, the
set of all configurations, is finite and can be
iterated over explicitly.  Our main result for this case
is given as Theorem~\ref{thm:main}.  In Section~\ref{gamma} we
extend SPC to handle large or
infinite spaces of configurations where full 
enumeration is impossible or impractical.

\subsection{Description of the algorithm}

\begin{wrapfigure}{R}{0.61\textwidth}
\begin{minipage}{.90\textwidth}
\begin{algorithm}[H]
\alginit
\caption{Structured Procrastination w/ Confidence \label{alg.sp1}}
\setcounter{AlgoLine}{0}
	\SetKwInOut{Require}{require}
	\Require{Set $N$ of $n$ algorithm configurations}
	\Require{Lower bound on runtime, $\kappa_0$}
	\BlankLine
	\tcp{\emph{Initializations}}
	$t := 0$ \\
	\For{$i \in N$}{
		$C_i := $ new Configuration Tester for $i$\;
        $C_i.$\texttt{Initialize()}\;
	}
		
	\BlankLine
	\tcp{\emph{Main loop. Run until interrupted.}}
	\Repeat(\hspace{0.4cm}\emph{\small{// \texttt{GetLCB()} returns LCB as described in the text.}}){anytime search is interrupted}{ 
    	$i := \argmin_{i \in N} \ C_i.$\texttt{GetLCB()}\;
		$C_{i}.\texttt{ExecuteStep()}$\;
	}

	\Return{$\;\; i^* = \argmax_{i \in N} \left\{ C_i.\texttt{GetNumActive()} \right\}$}
    \SetKwProg{myClass}{Class}{}{}
    \SetKwFunction{Initialize}{Initialize}
    \SetKwFunction{ExecuteStep}{ExecuteStep}
    \SetKwFunction{ConfigurationTester}{ConfigurationTester}

	\BlankLine
	\tcp{\emph{Configuration Testing Controller.}}
	\myClass{\ConfigurationTester{}}{
    \SetKwProg{myproc}{Procedure}{}{}
    \SetKwFunction{Initialize}{Initialize}
    \SetKwFunction{ExecuteStep}{ExecuteStep}
    \SetKwFunction{GetNumActive}{GetNumActive}

	\Require{Sequence $j_1,j_2,\ldots$ of instances}
	\Require{Global iteration counter, $t$}

	\BlankLine
	\myproc{\Initialize{}}{
        $r := 0$, $\theta := \kappa_0$, $q = 1$\; 
		$Q := $ empty double-ended queue\;
	}
	\BlankLine
	\myproc{\ExecuteStep{}}{
		$t := t + 1$\;
		\eIf(\hspace{0.5cm}\emph{\small{// Replenish queue}}){$|Q| < q$}{
			$r := r + 1$\;
			$\ell := r$\;
		}{
			Remove $(\ell,\theta')$ from head of $Q$\;
			$\theta := \theta'$\;
		}
		\eIf{$\run(i,j_\ell,\theta)$ terminates in time $\tau \leq \theta$}{
			$R_{i\ell\theta} := \tau$\;
		}{
			$R_{i\ell\theta} := \theta$\;
			Insert $(\ell,2 \theta)$ at tail of $Q$\;
		}
		$q := \lceil 25 \log( t \log r ) \rceil$\;
		}
	\BlankLine
	\myproc{\GetNumActive{}}{
		\Return{$r$}\;
}}
\end{algorithm}
\end{minipage}
\end{wrapfigure}

The algorithm is best described in terms 
of two components: a ``thread pool'' of 
subroutines called {\em configuration 
testers}, each tasked with testing one 
particular configuration, and a {\em scheduler}
that controls the allocation of time to
the different configuration testers. 
Because the algorithm is structured in this
way, it lends itself well to parallelization,
but in this section we will present and analyze
it as a sequential algorithm.

Each configuration tester provides, at all times,
a lower confidence bound (LCB) on the average running 
time of its configuration. 
The rule for computing the LCB will be specified
below; it is designed so that  
(with probability tending to 1 as time goes on) the 
LCB is less than or equal to the true average running 
time. The scheduler runs a main loop whose iterations 
are numbered $t=1,2,\ldots$. In each iteration $t$, it
polls all of the configuration testers for their LCBs,
selects the one with the minimum LCB, and passes
control to that configuration tester. The loop iteration
ends when the tester passes control back to 
the scheduler. 
SPC is an anytime algorithm, so the scheduler's main
loop is infinite; if it is prompted to return
a candidate configuration at any time, the algorithm 
will poll each configuration tester
for its ``score'' (described below) and then output the configuration
whose tester reported the maximum score.

The way each configuration tester $i$ operates is best 
visualized as follows.
There is an infinite stream of 
i.i.d.\ random instances $j_1,j_2,\ldots$ that 
the tester processes. Each of them is either {\em completed},
{\em pending} (meaning we ran the configuration
on that instance at least once, but it timed out before
completing), or {\em inactive}. An instance that is completed or pending 
will be called active. Configuration 
tester $i$
maintains state variables $\theta_i$ and $r_i$
such that the following invariants are satisfied 
at all times: (1) the first $r_i$ instances in the stream are active 
    and the rest are inactive; (2) the number of pending instances is at most $q = q(r_i,t) = 
      50 \log(t \log r_i)$; (3) every pending instance has been attempted with timeout $\theta_i$, and
    no instance has been attempted with timeout greater than $2 \theta_i$.
To maintain these invariants, 
configuration tester $i$ maintains a queue
of pending instances, each with a timeout
parameter representing the timeout threshold
to be used the next time the configuration attempts
to solve the instance.
When the scheduler passes control to 
configuration tester $i$, it either runs the pending 
instance at the head of its queue (if the queue has
$q(r_i,t)$ elements) or it selects an inactive
instance from the head of the i.i.d.\ stream and
runs it with timeout threshold $\theta_i$. 
In both cases, if the run exceeds its timeout, 
it is reinserted into the back of the queue 
with the timeout threshold doubled. 

At any time, if configuration tester $i$ is 
asked to return a score 
(for the purpose of selecting a candidate
optimal configuration) it simply outputs $r_i$, the number of active instances.
The logic justifying this choice of score 
function is that the scheduler devotes more
time to promising configurations than to those
that appear suboptimal; furthermore, better
configurations run faster on average and so 
complete a greater number of runs. This dual tendency
of near-optimal configuration testers to be
allocated a greater amount of running time and
to complete a greater number of runs per unit 
time makes the number of active instances a
strong indicator of the quality of a configuration, as we formalize in the analysis. 

We must finally specify how configuration tester $i$
computes its lower confidence bound on $R(i)$; 
see Figure~\ref{fig:1} for an illustration.
Recall that the configuration tester has a state
variable $\theta_i$ and that for every active instance $j$,
the value $R(i,j,\theta_i)$ is already known
because $i$ has either completed instance $j$,
or it has attempted instance $j$ with 
timeout threshold $\theta_i$.  Given some iteration of the algorithm, define $G$
to be the empirical cumulative distribution function (CDF) of $R(i,j,\theta_i)$
as $j$ ranges over all the active instances.
A natural estimation of $R_{\theta_i}(i)$ would be the expectation
of this empirical distribution, $\int_0^{\infty}(1-G(x))dx$.  Our
lower bound will be the expectation of a modified CDF, found by
scaling $G$ non-uniformly toward $1$.
To formally describe the modification we require some definitions.
Here and throughout this paper, we use the notation
$\log(\cdot)$ to denote the base-2 logarithm and
$\ln(\cdot)$ to denote the natural logarithm.
Let $\epsilon(k,r,t) = \sqrt{\frac{9 \cdot 2^{k} \ln(kt)}{r}}.$  For $0 < p < 1$ let 
\begin{equation} \label{eq:betaprt}
  \beta(p,r,t) = 
    \begin{cases}
      \frac{p}{1+\epsilon(\lfloor \log(1/p) \rfloor,r,t)}
      & \mbox{if } \epsilon(\lfloor \log(1/p) \rfloor, r, t) \le 1/2 \\
      0 & \mbox{otherwise.}
    \end{cases}
\end{equation}
Given a function $G : [0,\infty) \to [0,1]$, we let 
$$
  L(G,r,t) = \int_0^{\infty} \beta(1-G(x),r,t) \, dx.
$$
The configuration tester's lower confidence bound is $L(G_i,r_i,t)$, 
where $t$ is the current iteration, $r_i$ is the number of active instances,
and $G_i$ is the empirical CDF of $R(i,j,\theta_i)$.

To interpret this definition and Equation \eqref{eq:betaprt}, think of $p$ as the value of $1-G(x)$ for some $x$,
and $\beta(p,r,t) \leq p$ as a scaled-down version of $p$.  The scaling factor we use, $(1 + \epsilon(k,r,t))$, depends 
on the value of $p$; specifically, it increases with $k = \lfloor \log(1/p) \rfloor$.  In other words, we scale $G(x)$ more aggressively
as $G(x)$ gets closer to $1$.  If $p$ is too small as a function of $r$ and $t$, 
then we give up on scaling it and instead set it all the way to $\beta(p,r,t) = 0$.  To see this, note that for $k$ such that $2^{-k} \leq p < 2^{1-k}$,
if $k$ is large enough then we will have that $\epsilon(k,r,t) > 1/2$ so the second case of Equation~\eqref{eq:betaprt} applies.

We also note that $L(G_i,r_i,t)$ can be explicitly computed.
Observe that $G_i(x)$ is actually a 
step function with at most $r_i$ steps
and that $G_i(x)=1$ for $x > \theta_i$,
so the integral defining $L(G_i,r_i,t)$ is actually a 
finite sum that can be computed in $O(r_i)$
time, given a sorted list of the elements
of $\{R(i,j,\theta_i) \mid j \mbox{ active}\}$. Example \ref{example:runtime} illustrates the gains SPC can offer over SP.

\begin{example}\label{example:runtime}
 Suppose that there are two configurations: one that takes $100ms$ on every input and another that takes $1000ms$. With $\kappa_0 = 1 ms$, $\epsilon=0.01$, and $\zeta=0.1$, SP will set the initial queue size of each configuration to be at least\footnote{The exact queue size depends on the number of active instances, but this bound suffices for our example.} $7500$, because the queue size is initialized with a value that is at least $12 \epsilon^{-2} \ln(3 \beta n / \zeta)$. It will run each configuration $7500$ times with a timeout of $1ms$, then it will run each of them $7500$ times with a timeout of $2ms$, then $4ms$, and so on, until it reaches $128ms$. At that point it exceeds $100ms$, so the first configuration will solve all instances in its queue. However, for the first $2 \cdot 7500 \cdot (1+2+4+\cdots+64) = 1.9 \times 10^6$ milliseconds of running the algorithm---more than half an hour---essentially nothing happens: SP obtains no evidence of the superiority of the first configuration. 

In contrast, SPC maintians more modest queue sizes, and thus runs each configuration on fewer instances before running them with a timeout of $128ms$, at which point it can distinguish between the two. During the first $5000$ iterations of SPC, the size of each configuration's instance queue is at most $400$. This is because $r_i \le t$, and $t \le 5000$, so $q_i \le 25 \log(5000 \log(5000)) < 400$. Further, observe that $5000$ iterations is sufficient for SPC to attempt to run both configurations on some instance with a cutoff of $128ms$, since each configuration will first run at most $400$ instances with cutoff $1ms$, then at most $400$ instances with cutoff $2ms$, and so on. Continuing up to $64ms$, for both configurations, takes a total of $2 \cdot \log(64) \cdot 400 = 4800 < 5000$ iterations. Thus, it takes at most $2 \cdot 400 \cdot (1 + 2 + 4 + ... + 64) = 101,600$ milliseconds (less than two minutes) before SPC runs each configuration on some instance with cutoff time $128ms$. We see that SPC requires significantly less time---in this example, almost a factor of $20$ less---to reach the point where it can distinguish between the two configurations. 
\end{example}

\begin{figure}[t]
\hspace{1.5cm}
\floatbox[{\capbeside\thisfloatsetup{capbesideposition={left,top},capbesidewidth=5cm}}]{figure}[\FBwidth]
{\caption{\small An illustration of how we compute the lower bound on a configuration's average runtime. The distribution of a given configuration's true runtime is $F(x)$;  the empirical CDF, $G(x)$, constitutes observations sampled from $F(x)$ and censored at $\theta$. The configuration's expected runtime, the quantity we want to estimate, is the (blue) shaded region above curve $F(x)$. Our high-probability lower bound on this quantity is the (green) area above $G(x)$, scaled towards 1 as described in Equation \eqref{eq:betaprt}.}\label{fig:1}}
{\vspace{-.31cm}{\hspace{.5cm}
\begin{picture}(225,160)
\put(0,10){\includegraphics[width=3.25in]{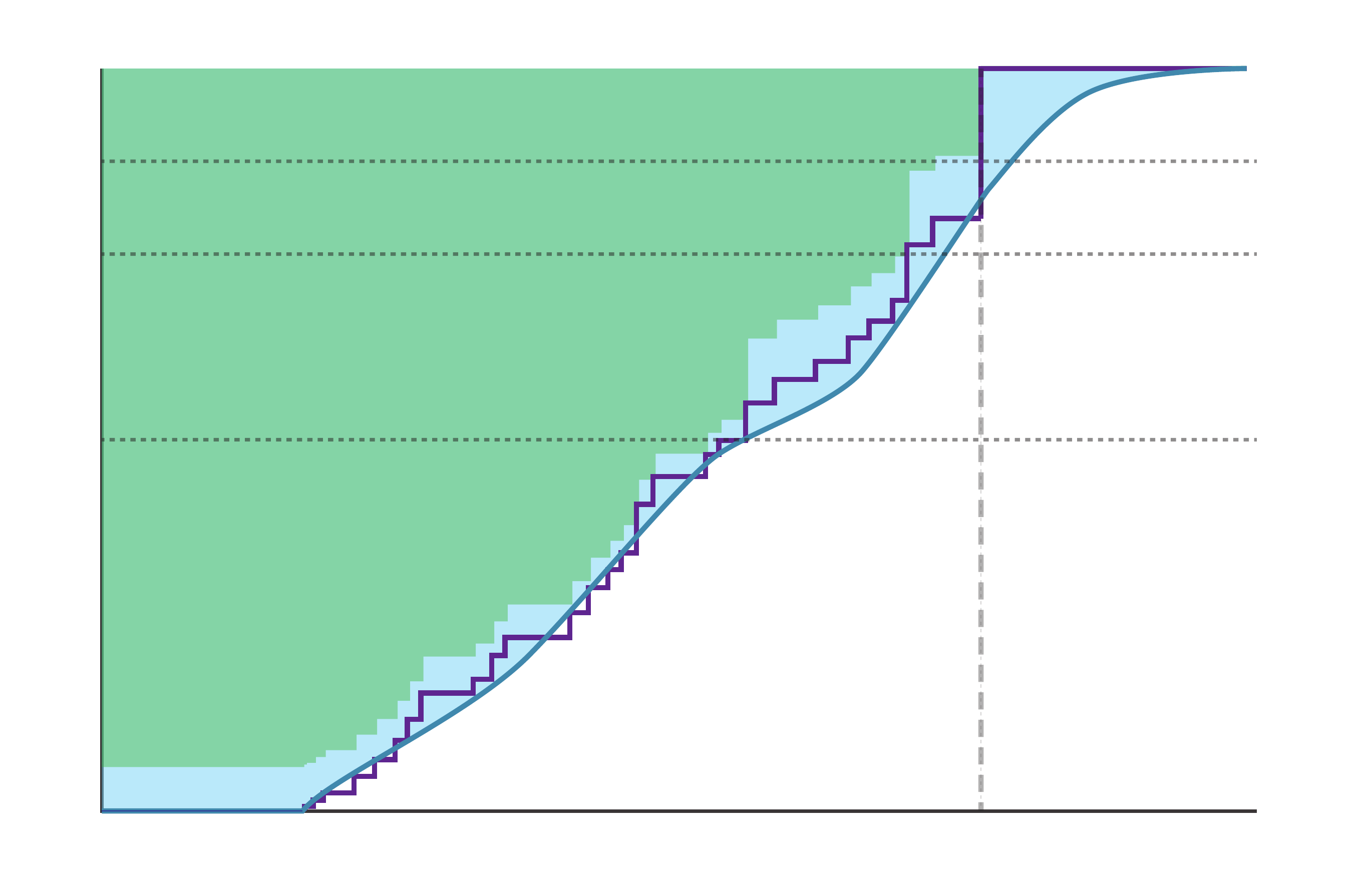}}
\put(100,10){Runtime}
\put(48,14){$\kappa_0$}
\put(165,14){$\theta$}
\put(-2,25){\mbox{\rotatebox{90}{\small Probability of solving an instance}}}
\put(10,20){$0$}
\put(8,84){\nicefrac{1}{2}}
\put(8,115){\nicefrac{3}{4}}
\put(8,131){\nicefrac{7}{8}}
\put(10,147){$1$}
\put(186,139){$F(x)$}
\put(102,48){$G(x)$}
\end{picture}}}
\vspace{-.8cm}
\end{figure}

\subsection{Justification of lower confidence bound}
\label{sec:lcb}

In this section we will show that for any configuration $i$
and any iteration $t$, with probability $1-O(t^{-5/4})$ the 
inequality $L(G_i,r_i,t) \le R(i)$ holds. 
Let $F_i$ denote the cumulative distribution function
of the running time of configuration $i$. Then 
$R(i) = \int_0^{\infty} 1 - F_i(x) \, dx$, so in order
to prove that $L(G_i,r_i,t) \le R(i)$ with high probability
it suffices to prove that, with high probability,
for all $x$ the inequality $\beta(1-G_i(x),r_i,t) \le 1-F_i(x)$ holds.
To do so we will apply a multiplicative error estimate 
from empirical process theory due to~\cite{wellner78}.
This error estimate can be used to derive  the following error bound in our setting.

\begin{lemma} \label{lem:wellner}
  Let $x_1,\ldots,x_n$ be independent random samples
  from a distribution with cumulative distribution function $F$, and $G$ their empirical CDF. For $0 \le b \le 1$, $x \geq 0$,
  and $0 \le \eps \le 1/2$ define the events $\mathcal{E}_1(b,x) = \{ 1-G(x) \ge b \}$ and $\mathcal{E}_2(\epsilon,x) = \big\{ \frac{1-G(x)}{1+\eps} > 1-F(x) \big\}$. Then we have
    $\Pr \left( \exists\ x \text{ s.t. } \mathcal{E}_1(b,x) \mbox{ and } \mathcal{E}_2(\epsilon,x) \right) \le \exp(-\tfrac14 \eps^2 n b).$
\end{lemma}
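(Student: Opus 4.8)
The plan is to deduce the lemma from Wellner's maximal inequality for the ratio of an empirical tail to the true tail, after a few routine reductions. First I would pass from the distribution functions to the tail (survival) functions, writing $\bar F = 1-F$ and $\bar G = 1-G$. The event $\mathcal{E}_2(\eps,x)$ then reads $\bar G(x) > (1+\eps)\bar F(x)$, i.e.\ the empirical tail overshoots the true tail by more than the factor $1+\eps$, while $\mathcal{E}_1(b,x)$ restricts attention to points where the empirical tail is at least $b$. Thus the quantity to bound is exactly
\[
  \Pr\!\left( \sup_{x \,:\, \bar G(x)\ge b} \frac{\bar G(x)}{\bar F(x)} > 1+\eps \right),
\]
where on the null set $\{\bar F(x)=0,\ \bar G(x)>0\}$ we may discard $x$ since almost surely no sample exceeds such an $x$. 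Second, I would reduce to i.i.d.\ uniform samples by the probability integral transform $U_i=F(x_i)$: for continuous $F$ the $U_i$ are uniform on $[0,1]$ and the tail ratio maps to the corresponding ratio of the uniform empirical process, while atoms of $F$ are handled by the usual right-continuous-inverse coupling, under which the true and empirical tails can only move together. Reflecting $x\mapsto -x$ turns survival functions into distribution functions, so that the overshoot direction $\bar G/\bar F$ becomes the classical upper (Daniels--Wellner) direction $\mathbb{G}_n/F \ge \lambda$.

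The heart of the argument is then Wellner's inequality~\cite{wellner78}, applied to control the empirical tail ratio restricted to a level set of the \emph{empirical} process. In the form I need it asserts, for $\lambda>1$ and $0\le b\le 1$, that
\[
  \Pr\!\left( \exists\, x : \bar G(x)\ge b \ \text{ and }\ \bar G(x)\ge \lambda\,\bar F(x) \right)
  \ \le\ \exp\!\big( -n\,b\,\psi(\lambda) \big),
  \qquad \psi(\lambda)=\lambda\ln\lambda-\lambda+1,
\]
where the effective sample count in the exponent is $nb$ precisely because the restriction sits on the empirical tail, and $\psi$ is the familiar multiplicative Chernoff rate. Taking $\lambda=1+\eps$ gives $\exp(-n b\,\psi(1+\eps))$ for our event. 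I would then close the gap to the stated constant with the elementary estimate $\psi(1+\eps)\ge \tfrac14\eps^2$, valid for $0\le\eps\le 1$ (hence on $[0,\tfrac12]$): setting $f(\eps)=\psi(1+\eps)-\tfrac14\eps^2$ one has $f(0)=0$, $f'(\eps)=\ln(1+\eps)-\tfrac12\eps$ with $f'(0)=0$, and $f''(\eps)=\tfrac{1}{1+\eps}-\tfrac12\ge 0$ on $[0,1]$, so $f\ge 0$ there. Substituting yields $\exp(-\tfrac14\eps^2 n b)$, as claimed.

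The step I expect to be the main obstacle is matching Wellner's inequality to the empirical-tail restriction with the correct effective count $nb$. The most commonly quoted forms of the ratio inequality restrict the supremum by the \emph{true} tail $\bar F(x)\ge a$, which only yields an exponent $na\,\psi(1+\eps)$ with $a$ possibly much smaller than $b$; converting to a restriction on $\bar G(x)\ge b$ needs a short two-regime argument at a witness $x$. If $\bar F(x) > b/(1+\eps)$, a single-point multiplicative Chernoff bound applied to $\bar G(x)\ge(1+\eps)\bar F(x)$ gives rate at least $n\bar F(x)\,\psi(1+\eps) \ge \tfrac{nb}{1+\eps}\psi(1+\eps) = nb\,g(\eps)$, where $g(\eps)=\tfrac{\psi(1+\eps)}{1+\eps}=\ln(1+\eps)-\tfrac{\eps}{1+\eps}$. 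If instead $\bar F(x)\le b/(1+\eps)$, the event $\bar G(x)\ge b$ is itself a deviation of $\mathrm{Bin}(n,\bar F(x))$ to $(1+\eps')$ times its mean with $1+\eps'=b/\bar F(x)\ge 1+\eps$, whose rate simplifies to exactly $nb\,g(\eps')\ge nb\,g(\eps)$; and one checks $g(\eps)\ge \tfrac14\eps^2$ on $[0,\tfrac12]$ by the same sign-of-derivative argument. So both regimes deliver the effective count $nb$ and the constant $\tfrac14$. The remaining subtlety is uniformity over the continuum of $x$: since $\bar G$ is a step function, the supremum reduces to the (random) order statistics, and the clean way to control it is the reverse-martingale (Daniels-type) structure of the ratio process $\bar G/\bar F$, which is exactly what Wellner's maximal inequality packages. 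This is why I would cite it directly rather than re-derive the supremum by peeling over dyadic tail scales.
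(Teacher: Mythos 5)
Your strategy is at heart the same as the paper's---reduce to the uniform empirical process via the probability integral transform, invoke Wellner's ratio inequality, and finish with a calculus estimate on the rate function---but the display you present as ``the form I need'' of Wellner's inequality is false as stated. Under a restriction on the \emph{empirical} tail $\bar G(x)\ge b$, no bound with exponent $nb\,\psi(\lambda)$ can hold; the correct exponent is $nb\,\psi(\lambda)/\lambda$. To see this, take $F$ uniform and the deterministic point $x$ with $\bar F(x)=b/\lambda$: the single-point event $\{\bar G(x)\ge b\}$ already implies both $\bar G(x)\ge b$ and $\bar G(x)\ge\lambda\,\bar F(x)$, and it is the event that a $\mathrm{Bin}(n,b/\lambda)$ variable is at least $nb$, whose probability is $\exp\big(-nb\,(\psi(\lambda)/\lambda)(1+O(b))\big)$ up to a polynomial factor. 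Since $\psi(\lambda)/\lambda<\psi(\lambda)$ strictly for $\lambda>1$, your displayed bound is violated for small $b$ and large $nb$, even at $\lambda=3/2$. This is precisely why the paper quotes Wellner's Lemma 2(i) in its \emph{inverse-process} form: that statement carries the empirical-level restriction natively, with rate $f(1/\lambda)=1/\lambda+\ln\lambda-1$, which equals $\psi(\lambda)/\lambda$.

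That said, you correctly flag this mismatch as the main obstacle, and your two-regime repair does close it: both regimes produce the exponent $nb\,g(\eps)$ with $g(\eps)=\psi(1+\eps)/(1+\eps)=\ln(1+\eps)-\eps/(1+\eps)$, which coincides exactly with the paper's $f(1/(1+\eps))$, and $g(\eps)\ge\eps^2/4$ on $[0,\tfrac{1}{2}]$, so the stated bound follows. Two details need tightening. First, do not finish with a union bound over the two regimes: a prefactor of $2$ on $\exp(-nb\,g(\eps))$ breaks the lemma whenever $nb\,(g(\eps)-\eps^2/4)<\ln 2$, e.g.\ for small $nb$. Instead observe that every witness of the original event yields a witness of the single event $\{\exists x:\ \bar F(x)\ge b/(1+\eps)\text{ and }\bar G(x)\ge(1+\eps)\bar F(x)\}$---in your second regime, pass to the point $x_0$ with $\bar F(x_0)=b/(1+\eps)$ and use monotonicity of $\bar G$---and then apply the true-tail-restricted maximal inequality once, with $a=b/(1+\eps)$. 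Second, ``the same sign-of-derivative argument'' does not establish $g(\eps)\ge\eps^2/4$: the derivative of the difference is $\eps\big((1+\eps)^{-2}-\tfrac{1}{2}\big)$, which changes sign at $\eps=\sqrt{2}-1<\tfrac{1}{2}$, so the difference is not monotone; it is nonnegative on $[0,\tfrac{1}{2}]$ only because it rises and then falls to the still-positive endpoint value $g(1/2)-1/16\approx 0.0096$, so an endpoint check is required. (The paper asserts the equivalent inequality $f(1/(1+\eps))\ge\tfrac{1}{4}\eps^2$ without proof.) With these repairs your argument is correct and in substance identical to the paper's; the only genuine difference is that the paper's citation of the inverse-process form of Wellner's lemma makes your regime-splitting step unnecessary.
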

To justify the use of $L(G_i,r_i,t)$ as a lower confidence
bound on $R(i)$, we apply Lemma~\ref{lem:wellner} with $b=2^{-k}, \, n = r$ and $\eps = \eps(k,r,t)$. With these parameters, 
$\frac14 \eps^2 n b = \frac94 \ln(kt)$, hence the 
lemma implies the following for all 
$k, r, t$:
\begin{equation} \label{eq:lgrt.1}
  \Pr \left( \exists\ x \text{ s.t. } \mathcal{E}_1(2^{-k},x) \mbox{ and } \mathcal{E}_2(\eps(k,r,t),x) \right) \le (kt)^{-9/4}.
\end{equation}
The inequality is used in the following
proposition to show that $L(G_i,r_i,t)$ is a lower bound on $R(i)$ with high probability.
\begin{lemma}\label{lem:lgrt} For each configuration tester, $i$, 
and each loop iteration $t$,
\begin{equation} \label{eq:lgrt.2}
  \Pr \left( \exists x \mbox{ s.t. }
    \beta(1-G_i(x),r_i,t) > 1 - F_i(x) \right) =
    O(t^{-5/4}).
\end{equation}
Consequently $\Pr \left( L(G_i,r_i,t) > R(i) \right) = O(t^{-5/4})$.
\end{lemma}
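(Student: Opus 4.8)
The plan is to first dispatch the ``consequently'' clause and then concentrate on the uniform bound \eqref{eq:lgrt.2}. For the former, recall that $R(i)=\int_0^\infty (1-F_i(x))\,dx$, so on the complement of the event in \eqref{eq:lgrt.2} we have $\beta(1-G_i(x),r_i,t)\le 1-F_i(x)$ for every $x\ge 0$; integrating this pointwise inequality gives $L(G_i,r_i,t)=\int_0^\infty \beta(1-G_i(x),r_i,t)\,dx \le \int_0^\infty(1-F_i(x))\,dx = R(i)$. Hence $\{L(G_i,r_i,t)>R(i)\}$ is contained in the event of \eqref{eq:lgrt.2}, and the two probabilities obey the same $O(t^{-5/4})$ bound, so it suffices to prove \eqref{eq:lgrt.2}.

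To analyze the event in \eqref{eq:lgrt.2}, first observe that the bad inequality $\beta(1-G_i(x),r_i,t)>1-F_i(x)$ can occur only when $\beta$ lies in its first (nonzero) branch, since $1-F_i(x)\ge 0$ while the second branch of \eqref{eq:betaprt} sets $\beta=0$. Writing $p=1-G_i(x)$ and letting $k$ be the dyadic level with $2^{-k}\le p<2^{1-k}$, the first branch gives $\beta(p,r_i,t)=p/(1+\epsilon(k,r_i,t))$ with $\epsilon(k,r_i,t)\le 1/2$, so the bad inequality is exactly $\tfrac{1-G_i(x)}{1+\epsilon(k,r_i,t)}>1-F_i(x)$, i.e.\ the event $\mathcal{E}_2(\epsilon(k,r_i,t),x)$; moreover $p\ge 2^{-k}$ means $\mathcal{E}_1(2^{-k},x)$ also holds. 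I would therefore partition over the (finitely many, by the constraint $\epsilon(k,r_i,t)\le 1/2$) dyadic levels $k$ and apply the specialized error bound \eqref{eq:lgrt.1} to each; that bound was calibrated precisely so that the pairing $b=2^{-k}$, $\epsilon=\epsilon(k,r,t)$ yields probability at most $(kt)^{-9/4}$. Summing over $k\ge 1$ gives, for a \emph{fixed} sample size $r$, a bound of $\sum_{k\ge1}(kt)^{-9/4}=O(t^{-9/4})$.

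The main obstacle is that Lemma~\ref{lem:wellner} speaks of a fixed number of i.i.d.\ samples from a fixed CDF, whereas in the algorithm both the number of active instances $r_i$ and the cap $\theta_i$ are random and data-dependent. I would remove $\theta_i$ first: since $\beta(0,\cdot,\cdot)=0\le 1-F_i(x)$, the bad event can be witnessed only at $x<\theta_i$, and there the capped empirical CDF coincides with the empirical CDF of the \emph{uncapped} runtimes $R(i,j_1),\dots,R(i,j_{r_i})$ (capping at $\theta_i$ changes only values at or beyond $\theta_i$). Thus the capped bad event is contained in the corresponding uncapped one, to which Lemma~\ref{lem:wellner} applies directly with $F=F_i$ and i.i.d.\ samples. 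To remove the dependence on the random $r_i$, I would use that $r_i\le t$ always (it increments only on iterations devoted to $i$) and take a union bound over the value $r\in\{1,\dots,t\}$: the bad event for the realized $r_i$ is contained in $\bigcup_{r\le t}\{$bad event computed from the first $r$ i.i.d.\ samples$\}$, each governed by the fixed-$r$ analysis above. This union introduces a factor of $t$, converting the per-$r$ bound $O(t^{-9/4})$ into $t\cdot O(t^{-9/4})=O(t^{-5/4})$, which is exactly the claimed rate.

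I expect the delicate points to be (i) setting up the dyadic decomposition so that the floor in the definition of $k$ in \eqref{eq:betaprt} lines up with the threshold $b=2^{-k}$ used in \eqref{eq:lgrt.1} (an off-by-one here only affects constants and the convergent sum $\sum_k k^{-9/4}$), and (ii) justifying the reduction to fixed, non-adaptive $(r,\theta)$ cleanly. It is the union over the $t$ possible values of $r_i$ that accounts for the loss from the per-sample-size exponent $9/4$ down to the stated $5/4$.
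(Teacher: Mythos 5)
Your proposal is correct and takes essentially the same route as the paper's own proof: the paper likewise establishes \eqref{eq:lgrt.2} by summing inequality \eqref{eq:lgrt.1} over the dyadic levels $k=1,2,\ldots$ (using $\sum_{k\ge 1} k^{-9/4} < \infty$) and union-bounding over $r_i \in \{1,\ldots,t\}$ --- exactly the step you identify as the source of the drop from $t^{-9/4}$ to $t^{-5/4}$ --- and then integrates the pointwise inequality over $x$ to obtain $L(G_i,r_i,t) \le R(i)$. The only difference is that you make explicit some details the paper leaves implicit (the containment of the bad event in $\mathcal{E}_1 \cap \mathcal{E}_2$ at each dyadic level, the reduction from the $\theta_i$-capped empirical CDF to the uncapped i.i.d.\ setting, and the union bound over the data-dependent $r_i$), all of which you handle correctly.
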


\subsection{Running time analysis}

Since SPC spends less time running bad configurations, we are able to show an improved runtime bound over SP. Suppose that $i$ is $(\eps,\delta)$-suboptimal. 
We bound the expected amount
of time devoted to running $i$ during the first
$t$ loop iterations. We show that this quantity
is $O(\eps^{-2} \delta^{-1} \log(t \log (1/\delta) ) )$.
Summing over $(\eps,\delta)$-suboptimal 
configurations yields our main result, which is that Algorithm~\ref{alg.sp1} is extremely unlikely to return an $(\epsilon,\delta)$-suboptimal configuration once its runtime exceeds the average runtime of the best configuration by a given factor.
Write $B(t,\eps,\delta) = \eps^{-2} \delta^{-1} \log(t \log(1/\delta))$. 

\begin{theorem}\label{thm:main}
Fix $\eps$ and $\delta$ and let $S$ be the set of $(\eps,\delta)$-optimal configurations.  For each $i \not\in S$ suppose
that $i$ is $(\eps_i,\delta_i)$-suboptimal, with $\eps_i \geq \eps$ and $\delta_i \geq \delta$.
Then if the time spent running SPC is 
\[\Omega\bigg(R(i^*) \bigg( |S| \cdot B(t,\eps,\delta) + \sum_{i \not\in S} B(t,\eps_i,\delta_i) \bigg) \bigg),\]
where $i^*$ denotes an optimal configuration, then SPC will return an $(\eps,\delta)$-optimal configuration when it is terminated, with high probability in $t$.
\end{theorem}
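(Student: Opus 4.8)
The plan is to combine the lower-confidence-bound guarantee of Lemma~\ref{lem:lgrt} with a per-configuration bound on the time the scheduler devotes to each suboptimal configuration, and then to argue that the score-based selection returns a good configuration once the budget is spent. I would prove the per-configuration runtime bound in expectation, so that the rare failure of the inequality $L(G_i,r_i,s)\le R(i)$---which by Lemma~\ref{lem:lgrt} has probability $O(s^{-5/4})$---contributes only negligibly to the expected time (its small probability outweighs the bounded extra runtime it can cause), and then convert the expectation into a high-probability statement by Markov's inequality. For the selection step I would additionally condition on the good event that the LCBs are valid at the terminal iteration $t$, which fails with probability only $O(n\,t^{-5/4}) = o(1)$; this is the source of the ``high probability in $t$'' conclusion.

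The core of the argument is the per-configuration expected-time bound: the expected time spent on a single $(\eps_i,\delta_i)$-suboptimal configuration $i \notin S$ during the first $t$ iterations is $O(R(i^*)\cdot B(t,\eps_i,\delta_i))$. On the event that its LCB is valid, the scheduler runs $i$ at iteration $s$ only when its LCB is minimal, hence only when $L(G_i,r_i,s) \le L(G_{i^*},r_{i^*},s) \le R(i^*)$, using validity for the optimal configuration in the last step. The quantitative heart is the converse: I would use Definition~\ref{def:eps-delta-opt} to locate the informative captime $\theta^*$ at which $\Pr_j(R(i,j)>\theta^*) \approx \delta_i$, where suboptimality forces $R_{\theta^*}(i) > (1+\eps_i)R(i^*)$, and then show that the scaling $\beta$ of Equation~\eqref{eq:betaprt} deflates the empirical tail mass at level $p\approx\delta_i$ only by the factor $1/(1+\epsilon(\lfloor\log(1/\delta_i)\rfloor,r,s))$, so that the LCB already exceeds $R(i^*)$ once $\epsilon(\log(1/\delta_i),r_i,s) < \eps_i$, i.e.\ once $r_i = \Omega(\eps_i^{-2}\delta_i^{-1}\log(s\log(1/\delta_i))) = \Omega(B(s,\eps_i,\delta_i))$. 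This caps $r_i$ at $O(B(t,\eps_i,\delta_i))$. To convert this count into a time bound I would sum the capped running times over the doubling captimes $\kappa_0,2\kappa_0,\dots$: the doubling structure and the elimination rule jointly ensure that each additional captime level either certifies $i$ as suboptimal after few samples or contributes only $O(R(i^*))$ expected cost per active instance, so the total time on $i$ telescopes to $O(R(i^*)\cdot B(t,\eps_i,\delta_i))$.

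With this per-configuration bound I would sum over $i \notin S$: suboptimal configurations consume only $O(R(i^*)\sum_{i\notin S}B(t,\eps_i,\delta_i))$ of the budget, and each ends the run with at most $O(B(t,\eps,\delta))$ active instances, since $\eps_i\ge\eps$ and $\delta_i\ge\delta$ give $B(t,\eps_i,\delta_i)\le B(t,\eps,\delta)$. The remaining $\Omega(R(i^*)\,|S|\,B(t,\eps,\delta))$ portion of the stated budget is therefore spent selecting configurations in $S$; because an $(\eps,\delta)$-optimal configuration has capped mean runtime at most $(1+\eps)R(i^*)$ at its certifying threshold, and the procrastination structure keeps its frontier captime at the same order, each such run costs $O(R(i^*))$ on average, so the members of $S$ are selected a total of $\Omega(|S|\,B(t,\eps,\delta))$ times. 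A pigeonhole over the $|S|$ members of $S$ then yields some $i'\in S$ with strictly more than $\max_{i\notin S} r_i$ active instances, so (on the terminal good event) the $\argmax$ of the scores returned at termination lies in $S$ and is $(\eps,\delta)$-optimal, as claimed.

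The step I expect to be the main obstacle is the quantitative lower bound on $L(G_i,r_i,s)$ together with the captime control. One must turn the purely existential condition of Definition~\ref{def:eps-delta-opt} into a statement about the specific empirical integral defining the LCB, identify which level $k=\lfloor\log(1/\delta_i)\rfloor$ of the scaling in Equation~\eqref{eq:betaprt} is binding, verify that the deferred-queue invariants actually drive $i$ to the informative captime without overshooting it by more than a factor of two, and absorb the slow loosening of the confidence bounds as $s$ grows---which is exactly what produces the $\log(t\log(1/\delta))$ factor and the occasional re-selection of previously eliminated configurations. Making the doubling captimes, the queue invariants, and the binding tail level line up so that $r_i = O(B(t,\eps_i,\delta_i))$ and the average per-run cost $=O(R(i^*))$ hold simultaneously is the delicate heart of the proof.
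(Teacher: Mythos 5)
Your overall decomposition is the same as the paper's: a per-configuration bound showing that each $(\eps_i,\delta_i)$-suboptimal configuration accumulates only $O(B(t,\eps_i,\delta_i))$ active instances and $O(R(i^*)\cdot B(t,\eps_i,\delta_i))$ running time (this is exactly the paper's Lemma~\ref{lem.lem1}), followed by the subtract-the-budget, pigeonhole, and score-comparison argument. The issue is in your in-line derivation of that per-configuration bound, at precisely the step you flag as the ``delicate heart.'' You claim that, on the good event that the LCBs are valid, a suboptimal configuration's LCB exceeds $R(i^*)$ as soon as the deflation factor $\epsilon(\lfloor\log(1/\delta_i)\rfloor,r_i,s)$ drops below $\eps_i$, yielding a deterministic cap $r_i = O(B(t,\eps_i,\delta_i))$. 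This step fails. The LCB is a deflation of the \emph{empirical} capped mean, whereas $(\eps_i,\delta_i)$-suboptimality (via Claim~\ref{claim:subopt}) constrains only the \emph{true} capped mean: $R_\phi(i) > (1+\eps_i)R(i^*)$. A suboptimal configuration that happens to draw an unrepresentative sample of easy instances has small empirical tail mass near the level $p \approx \delta_i$ --- the relevant level of Equation~\eqref{eq:betaprt} may even be zeroed out --- and hence a small LCB, no matter how small the deflation factor has become. Conditioning on validity (Lemma~\ref{lem:lgrt}) cannot rescue this: that lemma controls the opposite direction, preventing the LCB from rising \emph{above} the truth, whereas here you need the empirical mean not to fall too far \emph{below} the truth. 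The missing tool is a lower-tail concentration bound: the paper's proof of Lemma~\ref{lem.lem1} applies a Chernoff--Hoeffding bound to the $\phi$-capped samples scaled by $\phi$ (so that their mean is of order $\delta_i$), concluding that for $r \geq \lambda s_i$ with $s_i = 72\,\eps_i^{-2}\delta_i^{-1}\log(t\log(1/\delta_i))$, configuration $i$ is selected again with probability at most $e^{-c\lambda}$. The cap on $r_i$ is thus inherently in expectation with exponential tails, not deterministic. Relatedly, your threshold $\epsilon(\cdot) < \eps_i$ leaves no slack for this sampling error: the paper requires $\epsilon(\cdot) < \eps_i/2$ for the deflation and reserves a $(1-\eps_i/4)$ factor for the empirical deviation so that the product still beats $(1+\eps_i)^{-1}$.

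A secondary, more easily repaired flaw is in your pigeonhole step: you count \emph{selections} of members of $S$ and then conclude that some $i' \in S$ ends with more \emph{active instances} than every suboptimal configuration. These quantities are not interchangeable, since a single active instance can absorb many selections through repeated timeouts and captime doublings. The paper instead converts running time directly into active instances via Lemma~\ref{lem:time-bound}: the total time spent on $i'$ is at most $9\,r_{i'}L_{i'}$, and $L_{i'} \leq R(i^*)$ (on the good event) whenever $i'$ is selected, so $\Omega(R(i^*)\,B(t,\eps,\delta))$ time spent on $i'$ forces $r_{i'} = \Omega(B(t,\eps,\delta))$. Finally, note that your Markov-inequality conversion yields only a constant-probability guarantee; to get ``high probability in $t$'' one should instead exploit the exponential tails coming from the concentration step --- which again points back at the Chernoff argument your proposal omits.
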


 Rather than having an additive  $O(\epsilon^{-2} \delta^{-1})$ term for each of $n$ configurations considered (as is the case with SP), the bound in Theorem~\ref{thm:main} has a term of the form  $O(\epsilon_i^{-2} \delta_i^{-1})$, for each configuration $i$ that is not $(\epsilon, \delta)$-optimal, where $\epsilon_i^{-2} \delta_i^{-1}$ is as small as possible. This can be a significant improvement in cases where many configurations being considered are far from being $(\epsilon, \delta)$-optimal.
To prove Theorem~\ref{thm:main}, 
we will make use of the following lemma, which bounds the time spent running configuration $i$ in terms of its lower
confidence bound and number of active instances.

\begin{lemma} \label{lem:time-bound}
  At any time, if the configuration tester for
  configuration $i$ has $r_i$ active instances and
  lower confidence bound $L_i$, then the total 
  amount of running time that has been spent 
  running configuration $i$ is at most 
  $9 r_i L_i$.
\end{lemma}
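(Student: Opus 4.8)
The plan is to prove the bound level by level, where the ``levels'' are the doubling timeout thresholds $\kappa_0, 2\kappa_0, 4\kappa_0, \ldots$ that configuration tester $i$ uses. Writing $\theta_i = 2^L \kappa_0$ for the current threshold, I would decompose both sides of the inequality over the dyadic intervals $[2^{l-1}\kappa_0, 2^l\kappa_0)$, $1 \le l \le L$, and reduce the claim to a comparison that holds level by level. The guiding intuition is that the confidence scaling $\beta(\cdot,r_i,t)$ shrinks the empirical tail mass $1-G_i$ by at most a constant factor on every level that actually carries a meaningful amount of running time, and sets to $0$ only the extreme tail, which by design carries little time.

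On the time side, I would first bound the total running time spent at level $l$. Each active instance is attempted at level $l$ only if it failed at level $l-1$, i.e.\ only if its uncapped runtime exceeds $2^{l-1}\kappa_0$; the number of such instances is exactly $r_i\big(1-G_i(2^{l-1}\kappa_0)\big)$, and each attempt costs at most $2^l\kappa_0$. Summing the geometric series of timeouts that a single instance incurs shows the total time charged to one instance is at most twice its final timeout, so, writing $T_i$ for the total running time spent on $i$,
\[
  T_i \;\le\; C\,r_i \int_0^{\infty}\big(1-G_i(x)\big)\,dx
\]
for an absolute constant $C$ coming only from the doubling. This is the \emph{unscaled} empirical capped runtime times $r_i$, so the entire difficulty is to show that the confidence-scaled integral defining $L_i$ loses at most a constant factor relative to it.

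The key structural fact, which I would isolate as the heart of the proof, comes from the \emph{procrastination} discipline. The tester advances its timeout past level $l$ only when its queue of pending instances is full, i.e.\ only when at least $q = q(r_i,t)$ instances have timed out at level $l$; since the timeouts in the queue are nondecreasing from head to tail and are never decreased, reaching the current threshold $\theta_i = 2^L\kappa_0$ guarantees that at least $q$ active instances have runtime exceeding $2^{l-1}\kappa_0$ for every $l \le L$. Hence $1-G_i(x) \ge q/r_i$ for all $x \le \theta_i/2$. The definition of $q$ is calibrated against $\epsilon(\cdot,r_i,t)$ precisely so that $1-G_i(x)\ge q/r_i$ forces $\epsilon(\lfloor\log(1/(1-G_i(x)))\rfloor, r_i, t) \le \tfrac12$, which puts us in the first branch of \eqref{eq:betaprt} and yields $\beta(1-G_i(x),r_i,t) \ge \tfrac23\,(1-G_i(x))$ on all of $[0,\theta_i/2]$. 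Thus $L_i \ge \tfrac23\int_0^{\theta_i/2}(1-G_i(x))\,dx$, while the only region where $\beta$ can vanish is $(\theta_i/2,\theta_i)$, on which $1-G_i$ lies below the zeroing threshold and therefore contributes negligibly. Because $q$ is chosen to be at least twice that threshold, the captured mass on $[0,\theta_i/2]$ dominates the discarded tail by a factor large enough that, after tracking the doubling constant $C$, everything closes to the stated constant $9$.

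The step I expect to be the main obstacle is exactly this calibration: showing that the queue-fullness requirement of procrastination lines up quantitatively with the threshold at which $\beta$ collapses to $0$, so that every timeout level the tester has ever reached still has enough active instances above it to keep the scaling lossless. This is delicate both structurally (it requires reasoning about the queue dynamics and the monotonicity of the timeout to turn ``enough instances were pending when we advanced'' into ``enough instances are active now'') and quantitatively (the iteration count $t$ and instance count $r_i$ entering $q$ and $\epsilon$ must be reconciled, since in general they are evaluated at different moments). Once that alignment is in hand, the remainder is the routine geometric-series bookkeeping that produces the constant $9$.
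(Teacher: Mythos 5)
Your overall architecture matches the paper's: bound the total running time by a constant times $r_i$ times the empirical capped mean $\int_0^{\theta_i}(1-G_i(x))\,dx$ using the doubling of timeouts, then show the confidence scaling inside $L_i$ loses only a constant factor because the queue size $q$ is calibrated against $\epsilon(k,r_i,t)$. But the second half of your argument has a genuine gap, and it traces to a misreading of what $G_i$ is. In the paper, $G_i$ is the empirical CDF of $R(i,j,\theta_i)$ where $\theta_i$ is a threshold at which \emph{every} pending instance has already been attempted and failed (this is exactly what makes $R(i,j,\theta_i)$ known for all active $j$). Consequently every pending instance contributes mass exactly at $\theta_i$, so $1-G_i(x)\ge q/r_i$ for \emph{all} $x<\theta_i$, and this follows purely from the \emph{current} state of the queue: no historical reasoning about when earlier levels were passed is needed, so the reconciliation of $q$ and $\epsilon$ evaluated at different moments, which you flag as the main obstacle and leave unresolved, simply never arises. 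The paper then gets $\beta(1-G_i(x),r_i,t)\ge\tfrac23(1-G_i(x))$ on the whole interval $[0,\theta_i]$ (Lemma~\ref{lem:L.G.gap}), there is no discarded tail at all, and the constant closes as $6\cdot\tfrac32=9$.

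By contrast, your structural fact (the tail bound only on $[0,\theta_i/2]$, obtained from queue-fullness at past advancement times) is both weaker and harder to establish, and your treatment of the leftover region $(\theta_i/2,\theta_i)$ is wrong as stated: $1-G_i$ there is \emph{not} below the zeroing threshold --- the censored pending instances keep it at $\ge q/r_i$, which the calibration places above that threshold --- and its mass is not negligible; when most active instances are censored at $\theta_i$ it is a constant fraction of $\int_0^{\theta_i}(1-G_i)$. Your error is in the pessimistic direction, so the route can be repaired by monotonicity of $1-G_i$, namely $\int_{\theta_i/2}^{\theta_i}(1-G_i)\le\int_0^{\theta_i/2}(1-G_i)$, which gives $L_i\ge\tfrac23\int_0^{\theta_i/2}(1-G_i)\ge\tfrac13\int_0^{\theta_i}(1-G_i)$; but then the final bound is $3C\,r_iL_i$ with your doubling constant $C$ (the natural value, as in the paper, is $C=6$), i.e.\ roughly $18\,r_iL_i$ rather than $9\,r_iL_i$, so the claim that ``everything closes to the stated constant $9$'' is not substantiated. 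A further minor inaccuracy: a newly activated instance is first attempted at the \emph{current} threshold, not at $\kappa_0$, so ``attempted at level $l$ only if it failed at level $l-1$'' is false; the per-instance geometric-series bound (total time on $j$ at most $3R(i,j,2\theta_i)\le 6R(i,j,\theta_i)$), which is how the paper argues, is the correct way to get the time-side inequality.
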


The intuition is that because execution timeouts are successively doubled,
the total time spent running on a given input instance $j$ is not much more than the time of the
most recent execution on $j$.  But if we take an average over all active $j$, the total time spent on
the most recent runs is precisely $r$ times the average runtime under the empirical CDF.  The result then follows
from the following lemma, Lemma~\ref{lem:L.G.gap}, which shows that $L_i$ is at least a constant times this empirical average runtime.

\begin{lemma}\label{lem:L.G.gap}
At any iteration $t$, if the configuration tester for configuration $i$ has $r_i$ active instances and $G_i$ is the empirical CDF for $R(i,j,\theta_i)$, then
$L(G_i,r_i,t) \ge \frac23 \int_0^{\theta_i} (1-G_i(x)) \, dx.$
\end{lemma}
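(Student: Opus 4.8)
The plan is to compare the two integrands pointwise and then handle separately the region where the lower-bound function vanishes. Since $G_i(x)=1$ for $x>\theta_i$, we have $L(G_i,r_i,t)=\int_0^{\theta_i}\beta(1-G_i(x),r_i,t)\,dx$, so it suffices to compare $\beta(1-G_i(x),r_i,t)$ with $\tfrac23(1-G_i(x))$ as $x$ ranges over $[0,\theta_i]$. Writing $p=1-G_i(x)$, the first (nonzero) branch of \eqref{eq:betaprt} applies exactly when $\epsilon(\lfloor\log(1/p)\rfloor,r_i,t)\le\tfrac12$, and in that case $\beta(p,r_i,t)=\frac{p}{1+\epsilon(\lfloor\log(1/p)\rfloor,r_i,t)}\ge\frac{p}{1+1/2}=\tfrac23 p$. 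Note that the constants match precisely: the threshold $\epsilon\le\tfrac12$ is exactly what guarantees retention of a $\tfrac23$ fraction, which is surely why this lemma targets the constant $\tfrac23$. Thus on the part of $[0,\theta_i]$ where $\beta$ does not vanish, the integrand of $L$ already dominates $\tfrac23(1-G_i)$ pointwise, and the entire difficulty lies in the region where $\beta=0$.

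First I would characterize that region. Because $\epsilon(k,r_i,t)$ is increasing in $k$ and $k=\lfloor\log(1/p)\rfloor$ grows as $p$ shrinks, the vanishing region is precisely an upper tail $\{x:1-G_i(x)<p^*\}$ of the empirical runtime distribution, where $p^*$ is the value at which $\epsilon(\lfloor\log(1/p^*)\rfloor,r_i,t)=\tfrac12$. Solving $\epsilon(k,r_i,t)=\tfrac12$ gives $2^{k^*}\approx r_i/(36\ln(k^*t))$ and hence $p^*=\Theta\!\big(\log(t\log r_i)/r_i\big)$; in order-statistic terms the zeroed region corresponds to the largest $O(r_i p^*)=O(\log(t\log r_i))$ of the $r_i$ active runtimes. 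Since every capped runtime is at most $\theta_i$, the contribution of this region to $\int_0^{\theta_i}(1-G_i(x))\,dx$ is at most $p^*\theta_i=O\!\big(\theta_i\log(t\log r_i)/r_i\big)$.

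The hard part — and the crux of the argument — will be showing that this discarded upper-tail mass is small enough to be absorbed, so that $L$ exceeds $\tfrac23\int_0^{\theta_i}(1-G_i)$ rather than only a smaller constant fraction (a naive ``retain $\tfrac23$, discard the tail'' estimate would lose a second factor and fall short of $\tfrac23$). I see two levers to combine. The first is the slack in the pointwise bound: decomposing both $L$ and $\int_0^{\theta_i}(1-G_i)$ over the dyadic level sets $\{x:\lfloor\log(1/(1-G_i(x)))\rfloor=k\}$, one finds that for scales $k$ well below $k^*$ the factor $\frac{1}{1+\epsilon(k,r_i,t)}$ is close to $1$, producing surplus beyond $\tfrac23 p$ that can offset the lost tail. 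The second lever, which I expect to be essential for rigor, is to invoke the algorithm's invariants — that the number of pending (timed-out-at-$\theta_i$) instances is at most $q=O(\log(t\log r_i))$, together with the timeout-doubling structure relating $\theta_i$ to $r_i$ — in order to lower bound $\int_0^{\theta_i}(1-G_i)$ against $p^*\theta_i$. The main obstacle is exactly quantifying how the dyadic slack and these structural constraints together rule out the adversarial configuration in which nearly all of the empirical mass is concentrated in dyadic levels just below the vanishing threshold $p^*$, since in that case the pointwise surplus is negligible and the tail is not.
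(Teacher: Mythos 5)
Your opening moves match the paper's: reduce to the pointwise claim $\beta(1-G_i(x),r_i,t) \ge \tfrac23 (1-G_i(x))$ on $[0,\theta_i]$, and observe that wherever the first branch of \eqref{eq:betaprt} applies the retained factor is at least $1/(1+\tfrac12) = \tfrac23$. The gap is everything after that. You assume the zero branch of $\beta$ genuinely occurs somewhere on $[0,\theta_i]$, set out to absorb the discarded tail mass using slack at coarser dyadic scales plus structural constraints, and then candidly admit you cannot rule out the adversarial case where the empirical mass concentrates just below the vanishing threshold. The paper's proof closes this not by absorbing the tail but by showing the vanishing region is \emph{empty} on $[0,\theta_i]$: by the queue discipline, the $q(r_i,t)$ pending instances all have capped runtime equal to $\theta_i$, so $1-G_i(\theta_i) = q(r_i,t)/r_i$ and, by monotonicity, $1-G_i(x) \ge q(r_i,t)/r_i$ for all $x \le \theta_i$. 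The empirical survival function therefore never dips below $q/r_i$ before dropping to zero at $\theta_i$, so every $x \le \theta_i$ sits at a dyadic scale $k \le \lceil \log(r_i/q(r_i,t)) \rceil$. Since $\eps(k,r_i,t)$ is increasing in $k$, it suffices to check $\eps \le \tfrac12$ at that single largest scale, and this is exactly what the choice $q(r_i,t) = \Theta(\log(t \log r_i))$ (with its specific constant) guarantees: $2^k \le 2r_i/q$ gives $\eps(k,r_i,t)^2 \le 18\ln(kt)/q \le \tfrac14$. Hence the pointwise bound holds on the entire integration range and no compensation argument is needed.

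It is worth seeing that your plan cannot be repaired without this observation, because the inequality is simply false for a general $r_i$-atom empirical CDF: put one completed sample at $M$ just below $\theta_i$ and the remaining $r_i - 1$ at $\kappa_0$. If $1/r_i$ lies below the vanishing threshold, then $L(G_i,r_i,t) \le \kappa_0$ while $\tfrac23 \int_0^{\theta_i}(1-G_i(x))\,dx \approx \tfrac23\,M/r_i$, which can exceed $\kappa_0$ by an arbitrary factor. So the algorithm's invariant is not a supplementary ``lever'' for bounding tail mass---it is the entire content of the proof---and you invoke it in the wrong direction: you use the \emph{upper} bound (at most $q$ pending instances) to bound the lost mass, whereas what is needed is that the pending instances number (essentially) $q(r_i,t)$ and all sit at the cap $\theta_i$, which \emph{lower}-bounds $1-G_i$ throughout $[0,\theta_i]$ and thereby eliminates the zero region altogether.
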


Given Lemma~\ref{lem:time-bound}, it suffices to argue that a sufficiently suboptimal configuration
will have few active instances.  This is captured by the following lemma.

\begin{lemma}\label{lem.lem1}
  If configuration $i$ is $(\eps_i,\delta_i)$-suboptimal
  then at any iteration $t$, the expected number of active
  instances for configuration tester $i$ is bounded by
  $O(\eps_i^{-2} \delta_i^{-1} \log(t \log(1/\delta_i)))$
  and the expected amount of time spent running
  configuration $i$ on those instances is bounded 
  by $O(R(i^*) \cdot \eps_i^{-2} \delta_i^{-1} \log(t \log(1/\delta_i)))$
  where $i^*$ denotes an optimal configuration.
\end{lemma}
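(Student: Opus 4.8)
The plan is to bound the expected number of active instances $r_i$ of an $(\eps_i,\delta_i)$-suboptimal configuration at a fixed iteration $t$, after which the running-time bound follows almost immediately from Lemma~\ref{lem:time-bound}. Since a configuration that is $(\eps_i,\delta_i)$-suboptimal is also $(\eps_i',\delta_i)$-suboptimal for every $\eps_i'\le\eps_i$ and the target bound changes only by a constant factor, I may assume $\eps_i\le\tfrac12$. The driving observation is that $r_i$ increases only when tester $i$ is selected, and selection means $L(G_i,r_i,t)\le L(G_{i^*},r_{i^*},t)$; by Lemma~\ref{lem:lgrt} the event $L(G_{i^*},r_{i^*},t)>\OPT$ has probability $O(t^{-5/4})$, so on a good event every selected tester obeys $L(G_i,r_i,t)\le\OPT$. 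I would therefore fix a target $r_i^{\max}=C\,\eps_i^{-2}\delta_i^{-1}\log(t\log(1/\delta_i))$ and prove that, on the good event, $i$ is never selected once $r_i\ge r_i^{\max}$. Because $r_i$ grows by at most one per iteration and the failure probability at iteration $s$ is $O(s^{-5/4})$, the expected overshoot beyond $r_i^{\max}$ is at most $\sum_{s\le t}O(s^{-5/4})=O(1)$, giving $\E[r_i]=O(r_i^{\max})$.

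The core of the proof is showing that $r_i\ge r_i^{\max}$ forces $L(G_i,r_i,t)>\OPT$, contradicting selection. I would split on the current timeout $\theta_i$ relative to the $(1-\delta_i)$-quantile $\theta^\dagger$, the smallest threshold $\theta$ with $\Pr_{j}(R(i,j)>\theta)\le\delta_i$. When $\theta_i\ge\theta^\dagger$, suboptimality at the threshold $\theta^\dagger$ (Definition~\ref{def:eps-delta-opt}, whose second condition now holds) yields $R_{\theta^\dagger}(i)>(1+\eps_i)\OPT$, and I exploit this with a truncation trick: since $\theta_i\ge\theta^\dagger$ the integrand is nonnegative and $1-G_i(x)$ equals the uncensored empirical tail for all $x<\theta^\dagger$, so $L(G_i,r_i,t)\ge\int_0^{\theta^\dagger}\beta(1-G_i(x),r_i,t)\,dx$. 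On $[0,\theta^\dagger)$ we have $1-F_i(x)>\delta_i$, so the counterpart of Lemma~\ref{lem:wellner} (controlling underestimation of the tail) gives, with failure probability $\exp(-\Omega(\eps_i^2\delta_i r_i))=O(t^{-5/4})$ when $r_i\ge r_i^{\max}$, that $1-G_i(x)\ge(1-O(\eps_i))(1-F_i(x))\ge\delta_i/2$ throughout. This lower bound certifies $\eps(\lfloor\log(1/(1-G_i(x)))\rfloor,r_i,t)\le\tfrac12$, so $\beta$ is never zeroed out on $[0,\theta^\dagger)$, while $r_i\ge r_i^{\max}$ makes the same quantity $O(\eps_i)$, so the scaling factor is $1-O(\eps_i)$. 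Chaining these, $L(G_i,r_i,t)\ge(1-O(\eps_i))R_{\theta^\dagger}(i)>(1-O(\eps_i))(1+\eps_i)\OPT>\OPT$ for $C$ large enough, the desired contradiction.

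When instead $\theta_i<\theta^\dagger$, the confidence bound need not exceed $\OPT$ and I would argue structurally from the queue-size invariant. Here $\Pr_{j}(R(i,j)>\theta_i)>\delta_i$, so a Chernoff bound shows that on the order of $r_i\delta_i$ active instances time out at $\theta_i$; by invariant~(2) at most $q=O(\log(t\log r_i))$ of them can be pending, and tracking the procrastination dynamics—instances with $R(i,j)\in(\theta_i,2\theta_i]$ completing at the doubled timeout and freeing queue slots—shows that $r_i$ stays $O(q/\delta_i)=O(\delta_i^{-1}\log(t\log r_i))\le r_i^{\max}$ throughout this phase. The one delicate state is the single dyadic window $\theta_i\in[\theta^\dagger/2,\theta^\dagger)$ crossed by one doubling: entering it the queue bound already gives $r_i=O(q/\delta_i)$, and clearing the queue to advance to level $2\theta_i\ge\theta^\dagger$ adds only $O(q)$ further instances, keeping $r_i=O(q/\delta_i)$.

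Finally, the running-time statement follows by combining the count bound with Lemma~\ref{lem:time-bound}: the total time on $i$ is at most $9r_iL(G_i,r_i,t)$, and because $\eps(k,r,t)$ increases in $t$ the value $L(G_i,r_i,t)$ is nonincreasing in $t$ while $i$ is dormant, hence never exceeds its value $\le\OPT$ at the last iteration $i$ was selected; thus the time is $O(\OPT\cdot r_i^{\max})=O\big(R(i^*)\,\eps_i^{-2}\delta_i^{-1}\log(t\log(1/\delta_i))\big)$. I expect the main obstacle to be the second and third steps in tandem: proving the one-sided multiplicative deviation bound that simultaneously keeps $1-G_i$ above the zeroing threshold and controls the scaling error uniformly on $[0,\theta^\dagger)$ with a failure probability summable against the $s^{-5/4}$ budget, and reconciling the factor-of-two doubling schedule with the quantile $\theta^\dagger$ so that the confidence-bound and queue-overflow cases provably cover every state of the tester.
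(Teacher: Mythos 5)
Your proposal shares the paper's skeleton---condition on the event that the optimal tester's lower confidence bound is valid (Lemma~\ref{lem:lgrt}), note that selection forces $L(G_i,r_i,t)\le L(G_{i^*},r_{i^*},t)\le R(i^*)$, and show that once $r_i$ exceeds $\Theta(\eps_i^{-2}\delta_i^{-1}\log(t\log(1/\delta_i)))$ this inequality is an exponentially unlikely deviation, so the expected overshoot is $O(1)$---but the way you rule out the deviation is genuinely different. The paper fixes the threshold $\phi$ from Claim~\ref{claim:subopt}, lower-bounds $L(G_i,r_i,t)$ by $\tfrac{2}{2+\eps}$ times the empirical capped mean, and applies a scalar Chernoff--Hoeffding bound to that mean, viewed as an average of i.i.d.\ samples in $[0,\phi]$ with expectation $R_\phi(i)$; it never splits on the value of $\theta_i$. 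You instead (a) prove the deviation bound via a uniform multiplicative lower bound on the empirical tail, $1-G_i(x)\ge(1-O(\eps_i))(1-F_i(x))$ wherever $1-F_i(x)\ge\delta_i$, which is a ``reverse'' of Lemma~\ref{lem:wellner} that the paper never states or proves, and (b) add a separate queue-dynamics case for $\theta_i<\theta^\dagger$, whose heart is the pigeonhole observation that any active instance with $R(i,j)>2\theta_i$ must still be pending (invariant (3)), so a Chernoff bound plus the cap of $q$ on pending instances forces $r_i=O(q/\delta_i)$. Point (b) is a real contribution relative to the published argument: the paper's Chernoff step silently assumes the averaged samples have mean $R_\phi(i)$, which holds only once the censoring threshold has reached $\phi$, so your Case B covers a regime the paper's proof glosses over. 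Point (a), by contrast, is heavier machinery than necessary, and you rightly flag it as your main obstacle: because $1-G_i(x)$ is non-increasing in $x$, the only place you must prevent $\beta$ from zeroing out is the single point $x\uparrow\theta^\dagger$, which is one binomial Chernoff bound; after that you can fall back on the paper's scalar Hoeffding bound on the empirical capped mean and avoid the uniform-in-$x$ lemma entirely.

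Two smaller corrections. First, in the boundary window $\theta_i\in[\theta^\dagger/2,\theta^\dagger)$, clearing the queue does not add only $O(q)$ instances: each pop that completes frees a slot, and the fresh draw filling it completes with probability up to $1-\delta_i$ and is then itself replaced, so this level contributes $O(q/\delta_i)$ draws rather than $O(q)$; your conclusion $r_i=O(q/\delta_i)$ survives, but the accounting should say so. Second, in the running-time step, monotonicity of $L$ in $t$ during dormancy bounds $L(G_i,r_i,t)$ by its value just \emph{after} the last run of $i$, not by its value $\le\OPT$ at the moment of selection; that final run changes $G_i$, so you must additionally bound the jump (e.g., via Lemma~\ref{lem:L.G.gap} one has $\theta_i\le\tfrac{3}{2}r_iL_i/q$, so a single run perturbs the empirical mean, and hence $L_i$, by only $O(L_i)=O(\OPT)$), or more simply apply Lemma~\ref{lem:time-bound} at the instant of the last selection and account for that one run's duration separately.
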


Intuitively, Lemma~\ref{lem.lem1} follows because in order for the algorithm to select a
suboptimal configuration $i$, it must be that the lower bound for $i$ is less
than the lower bound for an optimal configuration.  Since the lower bounds
are valid with high probability, this can only happen if the lower bound for 
configuration $i$ is not yet very tight.  Indeed, it must be significantly less 
than $R_\phi(i)$ for some threshold $\phi$ with $\Pr_j(R(i,j) > \phi) \geq \delta_i$.
However, the lower bound cannot remain this loose for long: once the threshold
$\theta$ gets large enough relative to $\phi$, and we take sufficiently many samples
as a function of $\epsilon_i$ and $\delta_i$, standard concentration bounds will imply that the
empirical CDF (and hence our lower bound) will approximate the true runtime 
distribution over the range $[0,\phi]$.  Once this happens, the lower bound will exceed
the average runtime of the optimal distribution, and configuration $i$ will stop receiving
time from the scheduler.

Lemma \ref{lem.lem1} also gives us a way of determining $\epsilon$ and $\delta$ from an empirical run of SPC.  
If SPC returns configuration $i$ at time $t$, then by Lemma~\ref{lem.lem1} $i$ will not be $(\epsilon,\delta)$-suboptimal for any $\epsilon$ and $\delta$ for which $r_{i} = \Omega(\epsilon^{-2}\delta^{-1} \log(t \log(1/\delta)))$, where $r_i$ is the number of active instances for $i$ at termination time.  Thus, given a choice of $\epsilon$ and the value of $r_i$ at termination, one can solve to determine a $\delta$ for which $i$ is guaranteed to be $(\epsilon,\delta)$-optimal.  See Appendix \ref{app:ed} for further details.

Given Lemma~\ref{lem.lem1}, Theorem~\ref{thm:main} follows from a straightforward counting argument; see Appendix B.

\section{Handling Many Configurations}
\label{gamma}

Algorithm~\ref{alg.sp1} assumes a fixed set $N$ of $n$ possible configurations.  In practice, these configurations are often determined by the settings of dozens or even hundreds of parameters, some of which might have continuous domains.  In these cases, it is not practical for the search procedure to take time proportional to the number of all possible configurations. 
However, like Structured Procrastination, the SPC procedure can be modified to handle such cases.  What follows is a brief discussion; due to space constraints, the details are provided in the supplementary material.  

The first idea is to sample a set $\hat{N}$ of $n$ configurations from the large (or infinite) pool, and run Algorithm~\ref{alg.sp1} on the sampled set.  This yields an $(\epsilon,\delta)$-optimality guarantee with respect to the best configuration in $\hat{N}$.  Assuming the samples are representative, this corresponds to the top $(1/n)$'th quantile of runtimes over all configurations.  We can then imagine running instances of SPC in parallel with successively doubled sample sizes, appropriately weighted, so that we make progress on estimating the top $(1/2^k)$'th quantile simultaneously for each $k$.  This ultimately leads to an extension of Theorem~\ref{thm:main} in which, for any $\gamma > 0$, one obtains a configuration that is $(\epsilon,\delta)$-optimal with respect to $\OPT^\gamma$, the top $\gamma$-quantile of configuration runtimes.  This method is anytime, and the time required for a given $\epsilon$, $\delta$, and $\gamma$ is (up to log factors) $\OPT^\gamma \cdot \tfrac{1}{\gamma}$ times the expected minimum time needed to determine whether a randomly chosen configuration is $(\epsilon,\delta)$-suboptimal relative to $\OPT^\gamma$.

\section{Experimental Results}
\label{experiment}

\begin{figure}[t]
\hspace{-.5cm}
\floatbox[{\capbeside\thisfloatsetup{capbesideposition={left,top},capbesidewidth=4.0cm}}]{figure}[\FBwidth]
{\vspace{-.2cm} \caption{\small Mean runtimes for solutions returned by SPC after various amounts of compute time (blue line), and for those returned by LB for different $\epsilon, \delta$ pairs (red points). For LB, each point represents a different $\epsilon, \delta$ combination. Its size represents the value of $\epsilon$, and its color intensity represents the value of $\delta$. SPC is able to find a good solution relatively quickly. Different $\epsilon, \delta$ pairs can lead to drastically different runtimes, while still returning the same configuration. The $x$-axis is in log scale.}\label{fig:experiemnt}}
{\includegraphics[width=3.7in]{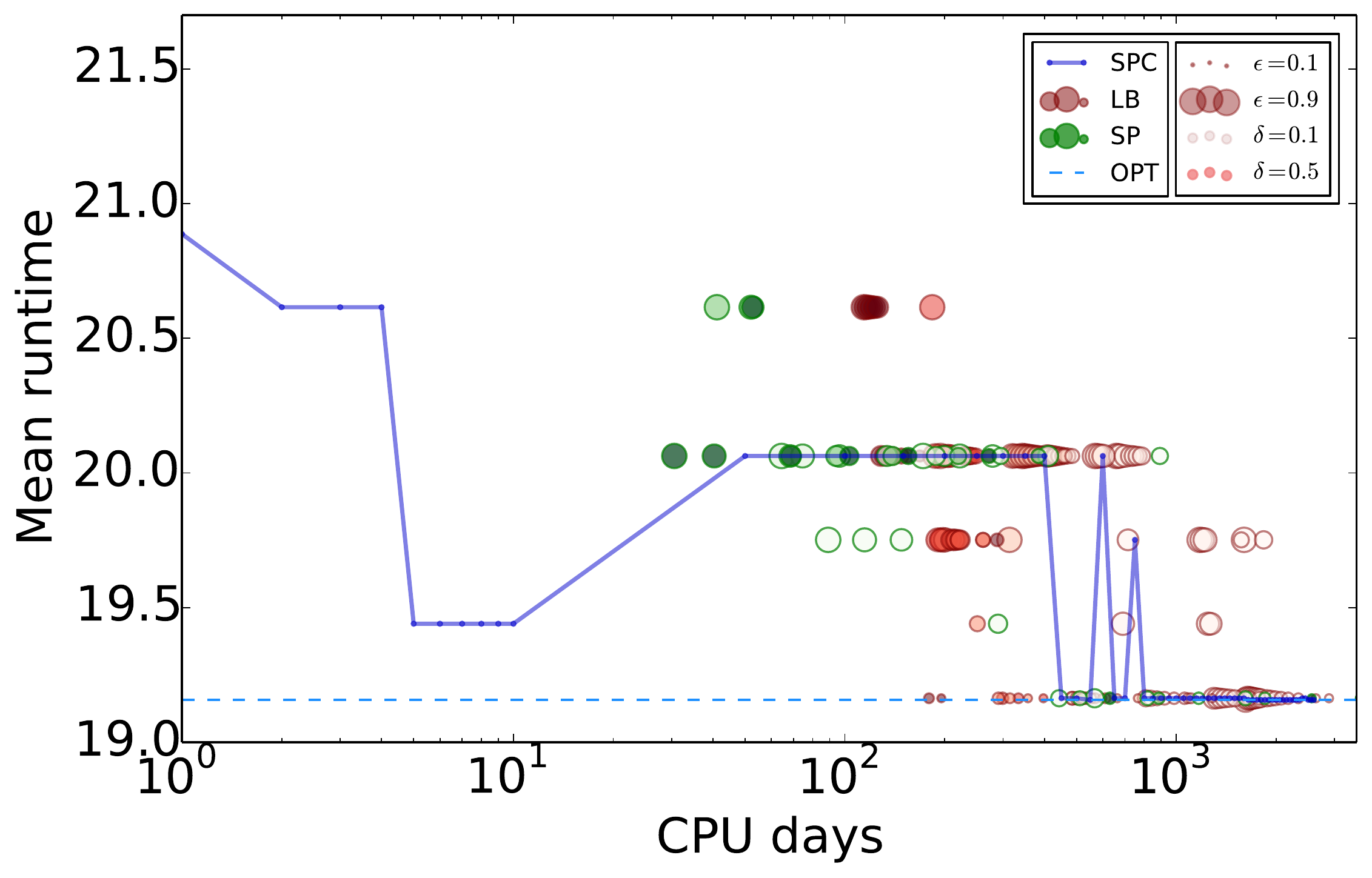}}
\vspace{-.7cm}
\end{figure}

We experiment\footnote{Code to reproduce experiments is available at \url{https://github.com/drgrhm/alg_config}} with SPC on the benchmark set of runtimes generated by \citet{weisz2018leapsandbounds} for testing \textsc{LeapsAndBounds}. This data consists of pre-computed runtimes for 972 configurations of the \texttt{minisat} \citep{sorensson2005minisat} SAT solver on 20118 SAT instances generated using CNFuzzDD\footnote{http://fmv.jku.at/cnfuzzdd/}. A key difference between SPC and LB is the former's anytime guarantee: unlike with LB, users need not choose values of $\epsilon$ or $\delta$ in advance. Our experiments investigate the impact of this property. To avoid conflating the results with effects due to restarts and their interaction with the multiplier of $\theta$, all the times we considered were for the non-resuming simulated environment.  

Figure \ref{fig:experiemnt} compares the solutions returned by SPC after various amounts of CPU compute time with those of LB and SP for different $\epsilon, \delta$ pairs chosen from a grid with $\epsilon \in [0.1, 0.9]$ and $\delta \in [0.1, 0.5]$. The $x$-axis measures CPU time in days, and the $y$-axis shows the expected runtime of the solution returned (capping at the dataset's max cap of $900s$). The blue line shows the result of SPC over time. The red points show the result of LB for different $\epsilon, \delta$ pairs, and the green points show this result for SP. The size of each point is proportional to $\epsilon$, while the color is proportional to $\delta$. 

We draw two main conclusions from Figure \ref{fig:experiemnt}. First, SPC was able to find a reasonable solution after a much smaller amount of compute time than LB. After only about 10 CPU days, SPC identified a configuration that was in the top 1\% of all configurations in terms of max-capped runtime, while runs of LB took at least 100 CPU days for every $\epsilon,\delta$ combination we considered. Second, choosing a good $\epsilon$, $\delta$ combination for LB was not easy. One might expect that big, dark points would appear at shorter runtimes, while smaller, lighter ones would appear at higher runtimes. However, this was not the case. Instead, we see that different $\epsilon, \delta$ pairs led to drastically different total runtimes, often while still returning the same configuration. Conversely, SPC lets the user completely avoid this problem. It settles on a fairly good configuration after about 100 CPU days. If the user has a few hundred more CPU days to spare, they can continue to run SPC and eventually obtain the best solution reached by LB, and then to the dataset's true optimal value after about 525 CPU days. However, even at this time scale many $\epsilon, \delta$ pairs led to worse configurations being returned by LB than SPC. 

\section{Conclusion}
\label{conclusion}
We have presented Structured Procrastination with Confidence, an approximately optimal procedure for algorithm configuration. SPC is an anytime algorithm that uses a novel lower confidence bound to select configurations to explore, rather than a sample mean. As a result, SPC \emph{adapts} to problem instances in which it is easier to discard poorly-performing configurations. We are thus able to show an improved runtime bound for SPC over SP, while maintaining the anytime property of SP. 

We compare SPC to other configuration procedures on a simple benchmark set of SAT solver runtimes, and show that SPC's anytime property can be helpful in finding good configurations, especially early on in the search process. However, a more comprehensive empirical investigation is needed, in particular in the setting of many configurations. Such large-scale experiments will be a significant engineering challenge, and we leave this avenue to future work. 

\bibliographystyle{icml2019}
\bibliography{main}

\newpage
\appendix
\section{Runtime Variation in Practice}
\label{sec:empirical} 

\begin{figure*}[t]\centering
\vbox{
\includegraphics[width=.8\textwidth]{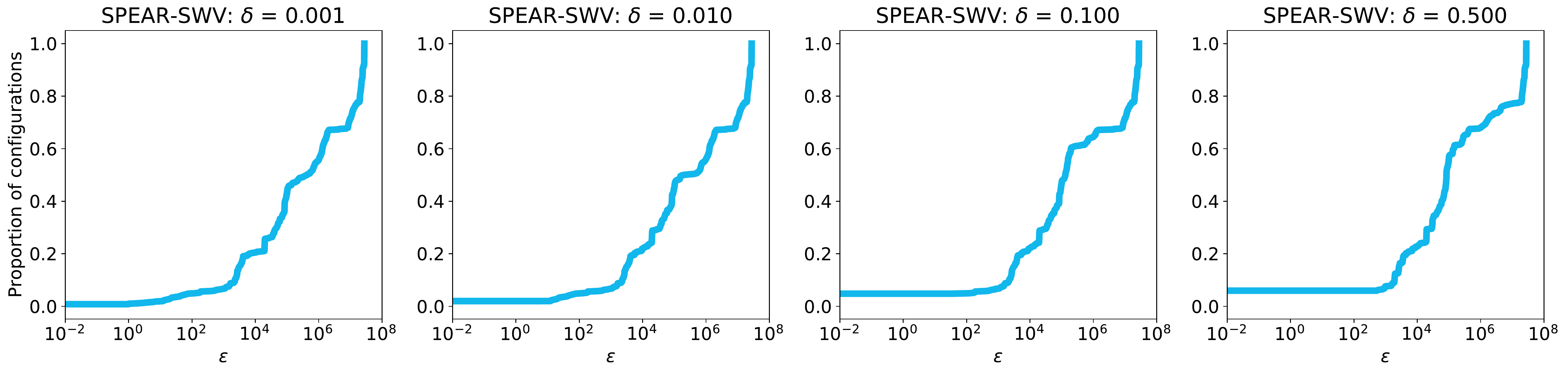}
\includegraphics[width=.6\textwidth]{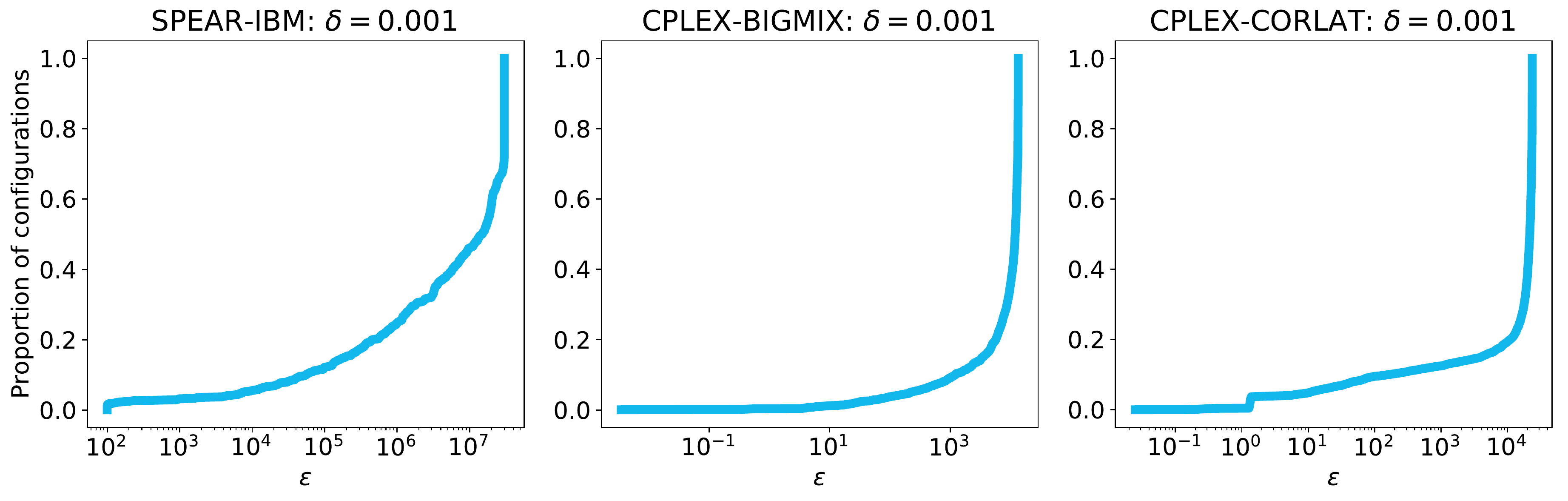}
}
\caption{Empirical runtime variation for different solvers and input distributions. For given $\delta$, each plot shows the fraction of configurations which are $(\epsilon, \delta)$-optimal for different values of $\eps$; data from \citet{hutter2014algorithm}. \textbf{(top)} SPEAR SAT solver configurations on SWV for various $\delta$. \textbf{(bottom)} SPEAR on IBM instances and CPLEX MIP solver on various distributions, for fixed values of $\delta$.}
\label{fig:epsilon}
\end{figure*}

Unlike Structured Procrastination, SPC is designed to perform better when  relatively few configurations are much faster on average than all others. It is thus worth asking whether this occurs in practice. We examined publicly available data from \citet{hutter2014algorithm} (see~{\smaller{\url{http://www.cs.ubc.ca/labs/beta/Projects/EPMs}}}), which studied the performance of two very different heuristic solvers (CPLEX, for mixed integer programs; and SPEAR, for satisfiability) on a total of 6 different benchmark distributions of practical problem instances; we investigate two distributions for each solver here.
These observations were generated by randomly sampling from solvers' parameter spaces, just as SPC does; runs were given a captime of 300 seconds. We modified the data so that capped runs were recorded as having finished in 300 seconds (to bias against reporting variation in average runtimes across configurations). 

We found a great deal of variation in average runtime across configurations; see Figure \ref{fig:epsilon}. Each plot corresponds to a specific value of $\delta$, and shows the CDF of the smallest value of $\epsilon$ for which each configuration remains $(\epsilon, \delta)$-optimal. The first row of this figure is based on different configurations of the SPEAR solver on SWV instances, with different figures corresponding to different $\delta$ values. Each figure's $x$-axis corresponds to $\eps$ values (on a log scale); the $y$-axis reports the fraction of configurations that were $(\eps, \delta)$-optimal for the given values of $\eps$ and $\delta$. Observe that many configurations (between 1\% and 6\%) tie for being best for a range of small $\eps$ values: this is because $\kappa_0 =0.01$ in this setting, so fast configurations were often indistinguishable. This fraction grows with $\delta$: more configurations become indistinguishable when we sanitize their performance on larger fractions of instances. In the bottom row, the point in each graph where the CDF spikes upward corresponds to configurations where most instances were capped; thus, these graphs understate the true runtime variation.

What do these results mean for SPC? Consider SPEAR--SWV with $\delta=.5$. Only about 5\% of configurations are optimal for $\epsilon$ less than about 100: i.e.,  even when capped runs are reported as having finished, 95\% of configurations take at least 100 times longer than an optimal configuration. SPC will easily discard these configurations, allocating very little time to refining their estimates. Broadly, we see a similar pattern across the other solver--distribution pairs.

\section{Omitted Proofs}

\subsection{Proof of Lemma~\ref{lem:wellner}}

Recall the statement of the lemma.  Let $x_1,\ldots,x_n$ be independent random samples
  from a distribution with cumulative distribution function $F$, and $G$ their empirical CDF. For $0 \le b \le 1$, $x \geq 0$,
  and $0 \le \eps \le 1/2$ define the events $\mathcal{E}_1(b,x) = \{ 1-G(x) \ge b \}$ and $\mathcal{E}_2(\epsilon,x) = \big\{ \frac{1-G(x)}{1+\eps} > 1-F(x) \big\}$. Then we have
  \[
    \Pr \left( \exists\ x \text{ s.t. } \mathcal{E}_1(b,x) \mbox{ and } \mathcal{E}_2(\epsilon,x) \right) \le \exp(-\tfrac14 \eps^2 n b).
  \]

We show how this result follows directly from a bound of Wellner~\citep{wellner78}. 
The {\em uniform empirical process} is the random
functon $\Gamma_n : [0,1] \to [0,1]$ defined by
drawing $n$ independent random samples 
$\xi_1,\ldots,\xi_n$ from the uniform 
distribution on $[0,1]$ and 
letting $\Gamma_n$ denote their
empirical CDF, i.e.\ the
cumulative distribution function of the
uniform distribution on $\{\xi_1,\ldots,\xi_n\}$.
Its left-continuous inverse $\Gamma_n^{-1}$
is defined by $\Gamma_n^{-1}(t) = \inf \{s \mid \Gamma_n(s) \ge t \}.$
Lemma~2(i) of~\citep{wellner78} asserts that 
for all $\lambda \ge 1$ and $0 \le b \le 1$,
\[
  \Pr \left( \lambda \le \sup_{b \le t \le 1} 
  \left\{ \frac{t}{\Gamma_n^{-1}(t)} \right\} \right)
  \le \exp(- n b f(1/\lambda))
\]
where $f(x) = x + \ln(1/x) - 1$. Reinterpreting this
using the substitutions $t = \Gamma_n(s)$ and 
$\lambda = 1+\eps$, and making use of the 
inequality $f \left( \frac{1}{1+\eps} \right) \ge \frac14 \eps^2$
for $0 \le \eps \le 1/2$, we get
\begin{align*}
\Pr \left( 1+\eps \le \sup \left\{ \left. 
             \frac{\Gamma_n(s)}{s} \right| \Gamma_n(s) \ge b \right\} \right)
  \le \exp( - \tfrac14 \eps^2 n b ),
  \\
  \forall \, 0 \le \eps \le 1/2, 0 \le b \le 1 \quad
\end{align*}
If $x_1,x_2,\ldots,x_n$ are i.i.d.\ samples drawn from an
atomless distribution with cumulative distribution function $F$,
then the numbers $F(x_1),\ldots,F(x_n)$ are independent
uniformly distributed random samples $[0,1]$,
as are $1-F(x_1),\ldots,1-F(x_n)$. 
Hence if $G$ denotes the empirical CDF of the 
samples $x_1,\ldots,x_n$, then both of the random functions 
$1 - G(F^{-1}(1-s))$ and $G(F^{-1}(s))$ are uniform 
empirical processes. Applying Wellner's Lemma 2(i),
and substituting $s = 1-F(x)$, we obtain Lemma~\ref{lem:wellner}.

\subsection{Proof of Lemma~\ref{lem:lgrt}}

Recall the statement of the lemma: For each configuration tester, $i$, 
and each loop iteration $t$,
\begin{equation*}
  \Pr \left( \exists x \mbox{ s.t. }
    \beta(1-G_i(x),r_i,t) > 1 - F_i(x) \right) =
    O(t^{-5/4}).
\end{equation*}
Consequently $\Pr \left( L(G_i,r_i,t) > R(i) \right) = O(t^{-5/4})$.

\begin{proof}
Sum inequality~\eqref{eq:lgrt.1} 
over $k = 1,2,\ldots$ and 
$r_i = 1,2,\ldots,t$, and use the fact 
that $\sum_{k \ge 1} k^{-9/4} < \infty$,
to deduce inequality~\eqref{eq:lgrt.2}.
Integrate over $0 < x < \infty$ 
to derive the final inequality.
\end{proof}

\subsection{Proof of Lemma~\ref{lem:L.G.gap}}

Recall the statement of the lemma: at any iteration $t$, if the configuration tester for configuration $i$ has $r$ active instances and $G$ is the empirical CDF for $R(i,j,\theta)$, then
\[ L(G,r,t) \ge \frac23 \int_0^{\theta} (1-G(x)) \, dx. \]

\begin{proof}
  Recalling that $L(G,r,t) = \int_0^{\infty} \beta(1-G(x),r,t) \, dx$, 
  it suffices to show that 
  \begin{equation}
  \label{eq.lem}
      \beta(1-G(x),r,t) \ge \frac23 (1-G(x))
  \text{ for all $x \le \theta$}. 
  \end{equation}
  
  To see why \eqref{eq.lem} holds, note that $1-G(\theta) = q(r,t)/r$ because 
  $q(r,t)/r$ is the fraction of pending instances 
  and they all have $R(i,j) \ge \theta$.
  Since $1-G(x)$ is a non-increasing function of $x$,
  this implies that $1-G(x) \ge q(r,t)/r$ for all 
  $0 \le x \le \theta$. 
  
     Recalling the formula for $\beta(p,r,t)$, it is clear
     that 
     \eqref{eq.lem}
     is equivalent to claiming that 
     $\eps(k,r,t) \le 1/2$ whenever 
     $x \le \theta$ and $2^{-k} < 1-G(x) \le 2^{1-k}$.
     Since $\eps(k,r,t)$ is an increasing function of $k$,
     and $1-G(x) \ge q(r,t)/r$, 
     it suffices to prove that $\eps(k,r,t) \le 1/2$ when
     $k = \lceil \log(r/q(r,t)) \rceil$. For this value
     of $k$ we have $\eps(k,r,t) \le \frac12$ as desired.
\end{proof}

\subsection{Proof of Lemma~\ref{lem:time-bound}}

Recall the statement of the lemma: at any time, if the configuration tester for
  configuration $i$ has $r$ active instances and
  lower confidence bound $L$, then the total 
  amount of running time that has been spent 
  running configuration $i$ is at most 
  $9 r L$.

\begin{proof}
    For each active instance $j$, the 
  total time spent running $i$ on $j$ is
  less than $6 \cdot R(i,j,\theta)$. This
  is because the doubling of timeout thresholds
  ensures that the time spent on all previous runs of
  $(i,j)$, combined, is at most twice the amount
  of time spent on the most recent run, which is
  at most $R(i,j,2\theta)$. Hence, the time spent on $j$ is at most $3 \cdot R(i,j,2\theta) \le 6 \cdot R(i,j,\theta)$
  Combining these bounds as $j$ ranges over active 
  instances, the total time spent running $i$ in the
  first $t$ iterations satisfies
  \begin{equation} \label{eq:6r} 
  \left( \mbox{total time spent running } i \right) 
  \le 6r \int_0^{\theta} (1 - G(x)) \, dx, 
  \end{equation}
  since the integral represents the empirical
  average of $R(i,j,\theta)$ over the
  active instances $j$. 
  The proof now follows from Lemma~\ref{lem:L.G.gap}.
\end{proof}

\subsection{Proof of Lemma~\ref{lem.lem1}}

Recall the statement of the lemma:
  if configuration $i$ is $(\eps_i,\delta_i)$-suboptimal
  then at any iteration $t$, the expected number of active
  instances for configuration tester $i$ is bounded by
  $O(\eps_i^{-2} \delta_i^{-1} \log(t \log(1/\delta_i)))$
  and the expected amount of time spent running
  configuration $i$ on those instances is bounded 
  by $O(R(i^*) \cdot \eps_i^{-2} \delta_i^{-1} \log(t \log(1/\delta_i)))$
  where $i^*$ denotes an optimal configuration.

\begin{proof}
We claim that if $i$ is $(\eps,\delta)$-suboptimal, 
then there is a timeout threshold 
$\phi$ and another configuration 
$i^*$ such that $R_{\phi}(i) > (1+\eps) R(i^*)$
and $\Pr_j(R(i,j) > \phi) \geq \delta$.  We prove this formally as Claim~\ref{claim:subopt} below.
Fix such an $i^*$ and $\phi$, and note that we must then
have $R_{\phi}(i) \geq \delta \phi$. In an iteration $t$ when configuration tester $i$
is chosen, let $r,\theta$ denote the internal 
state parameters of configuration tester $i$ and
let $G$ denote its empirical CDF. Similarly, for 
configuration tester $i^*$ let
$r^*,\theta^*$ denote the internal state parameters
and $G^*$ denote the empirical CDF. 
There are two cases to consider. \textbf{(I)} $L(G^*,r^*,t) > R(i^*)$.
  Section~\ref{sec:lcb} showed this event has probability
  $O(t^{-5/4})$. Summing over $t$, in expectation this case
  accounts for only $O(1)$ runs of configuration $i$:
\textbf{(II)} $L(G^*,r^*,t) \le R(i^*)$. 
  In this case, since we know that $R(i^*) <
  (1+\eps)^{-1} R_{\phi}(i)$, and the scheduler's
  selection rule implies that $L(G,r,t) \le L(G^*,r^*,t)$,
  we may conclude that $L(G,r,t) \le (1 + \eps)^{-1} R_{\phi}(i).$
  Letting $k_0 = \lceil \log(1/\delta) \rceil$ and
  recalling the formula for $\eps(k_0,r,t)$, we see
  that for $r > 72 \eps^{-2} \delta^{-1} \log(t \log(1/\delta))$,
  we have $\eps(k_0,r,t) < \eps/2$ and thus
  $\eps(k,r,t) < \eps/2$ for all $k \le k_0$. 
  This means that 
  $$\int_0^{\phi} \beta(1-G(x),r,t) \, dx
    > \frac{2}{2 + \eps} \int_0^\phi (1-G(x)) \, dx.$$
  If we observe that $\E[1-G(x)] = 1-F(x)$ and that
  $\int_0^\phi (1-F(x)) \, dx = R_{\phi}(i)$,
  we see that $L(G,r,t)$ is an average of $r$
  i.i.d.~random samples -- corresponding to scaled draws from
  the empirical distribution $G$ -- each of which lies in the range
  $[0,\phi]$ and has expected
  value greater than $(1+\eps/2)^{-1} R_{\phi}(i)$ (but at most $R_{\phi}(i)$).
  We wish to apply a Chernoff-Hoeffding bound to argue that these samples
  are sufficiently concentrated around their mean.  To this end,
  consider scaling these random variables by $\phi$,
  so that they lie in $[0,1]$ and have expected value at most
  $R_{\phi}(i) / \phi \leq \delta$.  Then
  for $\lambda \ge 1$ and $r > \lambda \cdot 72 
  \eps^{-2} \delta^{-1} \log(t \log(1/\delta))$
  the probability that the empirical average is 
  less than or equal to $(1+\eps)^{-1} R_{\phi}(i)$ is bounded
  above by $e^{-c \lambda}$ by the
  Chernoff-Hoeffding Bound, where $c>0$ is a constant.  (Indeed, as $(1+\eps)^{-1} R_{\phi}(i) \leq (1-\eps/4)(1+\eps/2)^{-1} R_{\phi}(i)$ for all $\epsilon \leq 1$, we can take $c$ to be any constant less than $72 / (2 * 4^2)$, so in particular $c = 2$ suffices.)  Hence, the expected
  number of values of $r$ for which $L(G,r,t) \le 
  (1+\eps)^{-1} R_{\phi}(i)$ is 
  $O(\eps^{-2} \delta^{-1} \log(t \log(1/\delta)))$.

  Let $s_i = 72 \eps_i^{-2} \delta_i^{-1} \log(t \log(1/\delta_i))$.
  The analysis of Case 2 above shows that 
  for $r \ge s_i$
  the probability that 
  we run configuration tester $i$ at least once during
  the first $t$ iterations with a number of active instances 
  equal to $r$ is at most 
  $\exp(-c r / s_i)$.
  Of course, for $r < s_i$ the probability is at most 1.
  Summing over $r=1,2,\ldots$ we obtain the upper bound
  on the expected number of active instances at iteration $t$.
  The bound on combined running time is then derived
  using Lemma~\ref{lem:time-bound}.
\end{proof}

\begin{claim}\label{claim:subopt}
If $i$ is $(\eps,\delta)$-suboptimal, 
then there is a timeout threshold 
$\phi$ and another configuration 
$i^*$ such that $R_{\phi}(i) > (1+\eps) R(i^*)$
and $\Pr_j(R(i,j) > \phi) \geq \delta$.
\end{claim}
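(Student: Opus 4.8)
The plan is to unpack Definition~\ref{def:eps-delta-opt} and locate $\phi$ at an appropriate quantile of $i$'s runtime distribution. Take $i^*$ to be an optimal configuration, so that $R(i^*) = \OPT$. Writing $g(x) = \Pr_{j\sim\Gamma}(R(i,j) > x)$ for the tail function (non-increasing and right-continuous) and recalling the identity $R_\theta(i) = \E_j[\min\{R(i,j),\theta\}] = \int_0^\theta g(x)\,dx$, so that $R_\theta(i)$ is continuous and non-decreasing in $\theta$, the statement that $i$ is $(\eps,\delta)$-suboptimal is precisely the assertion that for every threshold $\theta$ either $R_\theta(i) > (1+\eps)\OPT$ or $g(\theta) > \delta$. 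Contrapositively, whenever $g(\theta) \le \delta$ we must have $R_\theta(i) > (1+\eps)\OPT$, and this single fact is what I will exploit.

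The candidate threshold is the quantile $\phi = \inf\{\theta : g(\theta) \le \delta\}$. First I would confirm that this set is nonempty, so $\phi$ is finite: because runtimes are effectively capped at $\bar\kappa$ we have $g(\bar\kappa)=0\le\delta$. By the definition of the infimum, $g(\theta) > \delta$ for every $\theta < \phi$, while right-continuity of $g$ gives $g(\phi) \le \delta$; combined with monotonicity this yields $g(\theta) \le \delta$ for all $\theta \ge \phi$, so the contrapositive fact above forces $R_\theta(i) > (1+\eps)\OPT$ for every $\theta \ge \phi$, and in particular $R_\phi(i) > (1+\eps)\OPT$.

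It remains to secure the tail condition $g(\phi) \ge \delta$ simultaneously, and here the only delicate point is whether the runtime distribution has an atom at the quantile. I would split into two cases. If $g(\phi) = \delta$, then $\phi$ itself already satisfies both $R_\phi(i) > (1+\eps)\OPT$ and $g(\phi) = \delta \ge \delta$, and we are done. If instead $g(\phi) < \delta$, corresponding to a downward jump of $g$ at $\phi$ (an atom of $R(i,\cdot)$ at $\phi$), then I would perturb slightly downward and take $\phi' < \phi$: every such $\phi'$ has $g(\phi') > \delta \ge \delta$, and since $R_\phi(i) > (1+\eps)\OPT$ strictly while $R_\theta(i)$ is continuous in $\theta$, choosing $\phi'$ close enough to $\phi$ preserves $R_{\phi'}(i) > (1+\eps)\OPT$. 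Thus $\phi'$ meets both requirements.

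The whole argument is a careful reading of the definition together with the monotonicity/continuity structure of $g$ and $R_\theta$; I expect the main obstacle to be exactly the boundary bookkeeping in the last paragraph, namely reconciling the strict inequality demanded for $R_\phi(i)$ with the non-strict inequality on $g(\phi)$ when the distribution places positive mass precisely at the quantile. The asymmetry in the claim's two inequalities ($>$ for runtime versus $\ge$ for the tail probability) is what makes the atom case resolvable by a downward perturbation, so the key is to notice and use this asymmetry rather than to attempt to match strictness on both sides.
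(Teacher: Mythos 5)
Your proof is correct and takes essentially the same route as the paper's own: choose an optimal $i^*$, set $\phi = \inf\{\theta : \Pr_j(R(i,j) > \theta) \le \delta\}$, and combine right-continuity of the tail function with continuity of $\theta \mapsto R_\theta(i)$. If anything, your explicit case analysis at the quantile (perturbing $\phi$ slightly downward when the distribution has an atom there, so that the tail bound $\ge \delta$ is restored while the strict inequality $R_{\phi'}(i) > (1+\eps)R(i^*)$ survives by continuity) is \emph{more} careful than the paper's one-line justification, which asserts both conclusions ``by continuity'' without treating the atom case and even states the tail inequality with the inequality sign reversed, an apparent typo.
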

\begin{proof}
Choose $i^*$ to be the optimal configuration with respect to uncapped runtime.
By definition, a configuration $i$ is $(\eps,\delta)$-suboptimal if for all $\theta$ such that $\Pr_j(R(i,j) > \theta) \leq \delta$, $R_{\theta}(i) > (1+\eps)R(i^*)$.

Choose $\theta^* = \inf\{\theta \colon \Pr_j(R(i,j) > \theta) \leq \delta\}$.  Then by continuity of $R_\theta(i)$ with respect to $\theta$, we have that $R_{\theta^*}(i) > (1+\eps)R(i^*)$ and $\Pr_j(R(i,j) > \theta) \leq \delta$, as required.
\end{proof}

\subsection{Proof of Theorem~\ref{thm:main}}

Recall the statement of the theorem: fix some $\eps$ and $\delta$, and let $S$ be the set of $(\eps,\delta)$-optimal configurations.  For each $i \not\in S$ suppose
that $i$ is $(\eps_i,\delta_i)$-suboptimal, with $\eps_i \geq \eps$ and $\delta_i \geq \delta$.
Then if the total time spent running SPC is 
\[\Omega\bigg(R(i^*) \bigg( |S| \cdot B(t,\eps,\delta) + \sum_{i \not\in S} B(t,\eps_i,\delta_i) \bigg) \bigg),\]
where $i^*$ denotes an optimal configuration, then SPC will return an $(\eps,\delta)$-optimal configuration when it is terminated, with high probability in $t$.

\begin{proof}
Recall that $B(t,\eps,\delta) = \eps^{-2} \delta^{-1} \log(t \log(1/\delta))$.
Note that $B(t,\eps_i,\delta_i) \leq B(t,\eps,\delta)$ for each $i \not\in S$, by the choice of $\eps_i$ and $\delta_i$.
By Lemma~\ref{lem.lem1}, each $i \not\in S$ runs for a total time of $O(R(i^*) \cdot B(t,\eps_i,\delta_i))$.  Thus, the configurations in $S$ together ran for a total time of at least $\Omega(R(i^*) \cdot |S| \cdot B(t,\eps,\delta))$. At least one configuration $i \in S$ must therefore have run for a total time of $\Omega(R(i^*) \cdot B(t,\eps,\delta))$, and hence the number of active instances for this configuration $i$ is at least $\Omega(B(t,\eps,\delta))$.  As this is larger than the number of active instances for each $i \not\in S$, again by Lemma~\ref{lem.lem1}, we conclude that the configuration with largest number of active instances at termination time lies in $S$, as required.
\end{proof}

\section{Details of Handling Many Configurations}

Like Structured Procrastination, the SPC procedure can be modified to handle cases where the pool of candidates is very large.  Suppose we are given a (possibly infinite) pool $N$ of possible configurations, paired with an implicit probability distribution to allow sampling.  One idea is to sample a set $\hat{N}$ of $n$ configurations, and then run Algorithm~\ref{alg.sp1} on the sampled set.  This would yield an $(\epsilon,\delta)$-optimality guarantee with respect to the best configuration in $\hat{N}$.  
Motivated by this idea, for any $\gamma > 0$, we will define $OPT^\gamma = \inf\{ R \colon \Pr_{i \sim N}[R(i) > R] \leq \gamma \}$.  That is, $OPT^\gamma$ is the top $\gamma$'th quantile of runtimes over all configurations.  
For a fixed $\gamma > 0$, we can sample a set $\hat{N}$ of $O(1/\gamma \cdot \log (1/\gamma))$ configurations, then run Algorithm~\ref{alg.sp1} on the resulting sample.  With high probability (in $1/\gamma$), the optimal configuration from $\hat{N}$, $i^*$, will have $R(i^*) < OPT^\gamma$.  We then achieve a result similar to Theorem~\ref{thm:main}, but with $OPT^\gamma$ in place of $R(i^*)$, and with $\epsilon_i$ and $\delta_i$ now being random variables for each $i \in \hat{N}$.

This discussion assumed that we have advance knowledge of $\gamma$, but we can extend this approach to an anytime guarantee that simultaneously makes progress on every value of $\gamma$.
Suppose that, instead of simply sampling a fixed number of configurations in advance, we ran many instances of SPC in parallel, one for each value of $\gamma = 2^{-1}, 2^{-2}, 2^{-3}, \dotsc$.  For each $k \geq 1$, we draw a sample $\hat{N}_k$ of $\Theta(k \cdot 2^k)$ configurations and execute SPC on set $\hat{N}_k$.  If we share processor time in such a way that process $k$ receives a time share proportional to $1/k^2 = 1/\log(1/\gamma)^2$, then the end result is that the time required to find a configuration that is $(\epsilon,\delta)$-suboptimal with respect to $OPT^\gamma$ increases by a factor of $\log(1/\gamma)^2$, relative to the case in which $\gamma$ was given in advance. Combining these ideas, we arrive at the following extension of Theorem~\ref{thm:main} for the case of large $N$.  Recall that $B(t,\epsilon,\delta)$ is the runtime bound from Lemma~\ref{lem.lem1}.  Given some $i \in N$ and some $\epsilon,\delta,\gamma > 0$, if $i$ is not $(\epsilon,\delta)$-optimal with respect to $\OPT^\gamma$, write 
\begin{align*} V(i, \epsilon,\delta,\gamma,t) = \inf_{\epsilon',\delta' \colon \text{$i$ is $(\epsilon',\delta')$-suboptimal}}\{B(t,\epsilon',\delta')\}.
\end{align*}
Otherwise, set $V(i,\epsilon,\delta,\gamma,t) = B(t,\epsilon,\delta)$.  That is, $V(i,\epsilon,\delta,\gamma,t)$ is the tightest active-instance bound implied by Lemma~\ref{lem.lem1} for configuration $i$.  Write $V(\epsilon,\delta,\gamma,t) = E_{i \sim N}[V(i,\epsilon,\delta,\gamma,t)]$ for the expected number of active instances needed for a randomly sampled configuration.

\begin{theorem}
\label{thm.main.gamma}
Choose any $\epsilon$, $\delta$, and $\gamma$.  Suppose the total time $t$ spent running parallel instances of SPC, as described above, is at least 
$\Omega\left( OPT^{\gamma} \cdot \frac{\log^3(1/\gamma)}{\gamma} \cdot V(\epsilon,\delta,\gamma,t) \right).$
Then, with high probability in $t$, one of the parallel runs of SPC 
(corresponding to $k = \lceil \log(1/\gamma) \rceil$) 
will return an $(\epsilon,\delta)$-optimal configuration with respect to $\OPT^\gamma$.
\end{theorem}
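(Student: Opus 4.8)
The plan is to isolate the single parallel instance indexed by $k^* = \lceil \log(1/\gamma)\rceil$, whose sample $\hat{N}_{k^*}$ has size $\Theta(k^* 2^{k^*}) = \Theta(\gamma^{-1}\log(1/\gamma))$ tuned to the target quantile, and to obtain the quantile guarantee by applying Theorem~\ref{thm:main} to the finite sub-problem that this instance solves. First I would invoke the sampling argument described above (with $\gamma = 2^{-k^*}$): with high probability in $1/\gamma$, the sample $\hat{N}_{k^*}$ contains a configuration of expected runtime at most $\OPT^{\gamma}$, so the sub-problem optimum $i^*$ satisfies $R(i^*) \le \OPT^{\gamma}$. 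On this event I may substitute $\OPT^{\gamma}$ for $R(i^*)$ in the running-time budget of Theorem~\ref{thm:main}.

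The crux is to bound that budget, $R(i^*)\bigl(|S|\,B(t,\eps,\delta) + \sum_{i\notin S} B(t,\eps_i,\delta_i)\bigr)$, by the population quantity $V(\eps,\delta,\gamma,t)$. Here the suboptimality parameters $(\eps_i,\delta_i)$ in Theorem~\ref{thm:main} are measured against $R(i^*)$, whereas $V(i,\cdot)$ is defined against $\OPT^{\gamma}$. Since $R(i^*)\le\OPT^{\gamma}$, a configuration that is $(\eps',\delta')$-suboptimal relative to $\OPT^{\gamma}$ is also $(\eps',\delta')$-suboptimal relative to $R(i^*)$, so the largest suboptimality parameters attainable in the sub-problem dominate those defining $V(i,\cdot)$ componentwise; as $B$ is decreasing in each argument, the per-configuration term is at most $V(i,\eps,\delta,\gamma,t)$ (the $(\eps,\delta)$-optimal configurations are handled identically, their term being exactly $B(t,\eps,\delta)=V(i,\cdot)$). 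Summing, the Theorem~\ref{thm:main} budget is at most $\sum_{i\in\hat N_{k^*}} V(i,\eps,\delta,\gamma,t)$. I would then pass to the expectation: every summand lies in $[0,B(t,\eps,\delta)]$ because $V(i,\cdot)\le B(t,\eps,\delta)$ for all $i$, and by linearity the expected sum equals $|\hat N_{k^*}|\cdot V(\eps,\delta,\gamma,t) = \Theta(\gamma^{-1}\log(1/\gamma))\cdot V(\eps,\delta,\gamma,t)$, so a concentration (or Markov) bound controls the realized sum up to a constant absorbed into the $\Omega(\cdot)$.

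It remains to match this against the time actually allotted to instance $k^*$. Because process $k$ receives a share proportional to $1/k^2$ and $\sum_{k\ge1}k^{-2}=\Theta(1)$, instance $k^*$ runs for a $\Theta((k^*)^{-2})=\Theta(\log^{-2}(1/\gamma))$ fraction of the total time $t$, i.e.\ for $\Theta(t/\log^2(1/\gamma))$. By the previous two steps, Theorem~\ref{thm:main} certifies that this instance returns an $(\eps,\delta)$-optimal configuration (with respect to its optimum, hence with respect to $\OPT^{\gamma}$) once its allotted time is $\Omega(\OPT^{\gamma}\cdot\gamma^{-1}\log(1/\gamma)\cdot V(\eps,\delta,\gamma,t))$. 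Equating the allotted time $\Theta(t/\log^2(1/\gamma))$ with this requirement yields the stated threshold $t=\Omega(\OPT^{\gamma}\cdot\gamma^{-1}\log^3(1/\gamma)\cdot V(\eps,\delta,\gamma,t))$, where one factor of $\log(1/\gamma)$ comes from the sample size and the remaining $\log^2(1/\gamma)$ from the $1/k^2$ time share.

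The hard part will be the bookkeeping in the crux step: simultaneously reconciling the two notions of suboptimality (against $R(i^*)$ versus $\OPT^{\gamma}$) and the two senses of high probability (in $1/\gamma$ for the sampling step, in $t$ for Theorem~\ref{thm:main}), and justifying that the random sum $\sum_{i\in\hat N_{k^*}}V(i,\cdot)$ concentrates. The last point is precisely what the uniform cap $V(i,\cdot)\le B(t,\eps,\delta)$ buys us: it bounds each term and makes a standard upper-tail bound applicable, so that replacing the realized budget by its expectation $|\hat N_{k^*}|\,V(\eps,\delta,\gamma,t)$ only costs a constant factor.
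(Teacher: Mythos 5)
Your overall architecture mirrors the paper's: the parallel construction and $1/k^2$ time-sharing are taken as given, you isolate the instance $k^* = \lceil \log(1/\gamma)\rceil$, invoke the sampling argument to get $R(i^*) \le \OPT^\gamma$, apply Theorem~\ref{thm:main} to the sampled sub-problem, and your accounting of the three $\log(1/\gamma)$ factors is exactly right. The genuine gap is in the step you yourself flag as the crux: controlling the realized budget $\sum_{i \in \hat{N}_{k^*}} V(i,\eps,\delta,\gamma,t)$ by a constant multiple of its expectation $|\hat{N}_{k^*}| \cdot V(\eps,\delta,\gamma,t)$. Your justification --- the summands lie in $[0, B(t,\eps,\delta)]$, so ``a concentration (or Markov) bound'' finishes --- does not work as stated. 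Markov gives only constant success probability, not a high-probability guarantee. And a bounded-range (Hoeffding-type) bound fails precisely in the regime this theorem is designed for: when most configurations are wildly suboptimal, $V(i,\cdot)$ is near zero for all but the roughly $\gamma$-fraction of configurations that are $(\eps,\delta)$-optimal with respect to $\OPT^\gamma$ (each contributing the full $B(t,\eps,\delta)$), so the mean $V(\eps,\delta,\gamma,t)$ can be as small as $\Theta(\gamma\, B(t,\eps,\delta))$. With $m = \Theta(\gamma^{-1}\log(1/\gamma))$ samples, Hoeffding's additive deviation is of order $B(t,\eps,\delta)\sqrt{m}$, which exceeds the mean $m V \approx B(t,\eps,\delta)\log(1/\gamma)$ by roughly a factor $\gamma^{-1/2}$ (up to log factors); boundedness alone buys you nothing. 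An elementary repair exists --- a multiplicative Chernoff--Bernstein bound combined with the explicit lower bound $V(\eps,\delta,\gamma,t) = \Omega(\gamma\, B(t,\eps,\delta))$ gives failure probability $\exp(-\Omega(mV/B)) = \gamma^{\Omega(1)}$ --- but you never establish that lower bound, and it is the essential missing ingredient. The paper instead attacks this step with Wellner's theorem (the same tool as Lemma~\ref{lem:wellner}): the empirical CDF of the $V(i,\cdot)$ values is within a constant factor of the true CDF except on a tail of probability about $\gamma^{1/2}$, whose contribution to the average is then bounded separately. Note also that either route yields failure probability polynomial in $\gamma$ (matching the sampling step), not vanishing in $t$, so it cannot simply be folded into ``high probability in $t$.''

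A secondary flaw: your claim that the per-configuration budget in Theorem~\ref{thm:main} is at most $V(i,\eps,\delta,\gamma,t)$ does not follow from the componentwise-domination argument you give, because Theorem~\ref{thm:main} requires witness pairs with $\eps_i \ge \eps$ and $\delta_i \ge \delta$, while the infimum defining $V(i,\cdot)$ is unconstrained: it may be attained at a pair with, say, $\eps' < \eps$ and $\delta'$ large, which is inadmissible in Theorem~\ref{thm:main} as stated. The repair is to open up the proof of Theorem~\ref{thm:main} and observe that all it actually uses is $B(t,\eps_i,\delta_i) \le B(t,\eps,\delta)$, which does hold for pairs approaching the infimum defining $V(i,\cdot)$ since $(\eps,\delta)$ itself is feasible there --- but that is a modification of the theorem, not the black-box application your write-up relies on.
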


We make two observations.  First, Theorem~\ref{thm.main.gamma} must account for events where the empirical average of $V(i, \epsilon, \delta, \gamma, t)$ over sampled configurations differs significantly from its expectation, $V(\epsilon, \delta, \gamma, t)$.  To bound this difference we use Wellner's theorem, as in Lemma~\ref{lem:wellner}, to show that the empirical CDF is within a constant factor of the true CDF nearly everywhere, except possibly at its lowest values (e.g., those that occur with probability at most $\gamma^{1/2}$).  Even if the empirical distribution varies by a significant amount on these lowest values (up to a factor of $\gamma^{-1/2}$) this will not significantly perturb the empirical average.
Second, note that the bound in Theorem~\ref{thm.main.gamma} is not necessarily monotone in $\gamma$, since $OPT^{\gamma}$ can decrease as $\gamma$ decreases.  This is natural: a broader search is costly, but finding a new fastest configuration will speed up the search procedure.  Thus, even if the user has a certain target value for $\gamma$ in mind, it can be strictly beneficial to allow SPC to search over smaller values of $\gamma$ as well.

\section{Details of Experiments}

Figure \ref{fig:experiemnt} shows the mean runtime of the best configurations found by SPC after various amounts of CPU compute time, and the best configurations returned by LB for different $\epsilon, \delta$ pairs. For SPC we plot points for 1, 2, 3, 5 and 10 CPU days, as well as for every 25 CPU days from 50 to 2600. For the runs of LB, we ran all $\epsilon, \delta$ pairs, with $\epsilon$ chosen from $\{0.1, 0.15, 0.2, 0.25, \dots, 0.9\}$, and $\delta$ chosen from $\{0.1, 0.15, 0.2, 0.25, \dots, 0.5 \}$, for a total of 153 observations. For SP we chose $\epsilon$ from $\{0.1, 0.2, \dots, 0.9\}$, and $\delta$ from $\{0.1, 0.2, \dots, 0.5 \}$,

As in \citet{weisz2018leapsandbounds}, we set the $\zeta$ parameter of LB to 0.1; we used a $\theta$ multiplier of 1.25 and 2 for LB and SPC respectively. As mentioned, all the runtimes we considered were for the simulated environment, which does not allow for restarts. This is the simplest possible scenario in which we can make this comparison. However, an investigation of the effects of restarts, in particular with different values of the $\theta$ multiplier, on these algorithms is an interesting line of future work.

\section{Deriving $\epsilon$ and $\delta$ from an Empirical Execution}\label{app:ed}

A run of SPC returns a configuration $i^*$.  Theorem~\ref{thm:main} provides an $(\epsilon,\delta)$-optimality guarantee, but we note that SPC does not explicitly report the values of $\epsilon$ and $\delta$ to the user.  Indeed, an important feature of SPC is that the quality implications of Theorem~\ref{thm:main} depend on the distribution of running times for the pool of configurations, so for ``easy'' problem instances the actual optimality guarantee attained might be significantly better than in the worst-case.

The following lemma shows that one can infer an improved runtime guarantee from the state of SPC at termination time.  We make use of this approach when evaluating the performance of SPC in experiments.  Roughly speaking, the configuration returned by SPC will be $(\epsilon,\delta)$-optimal when $\epsilon^2\delta$ is inversely proportional to $r_i$, up to logarithmic factors, where recall that $r_i$ is the number of active instances for $i$.

\begin{lemma}
\label{lem:delta}
Suppose that SPC returns configuration $i^*$.  Then for any $\epsilon > 0$, $\delta > 0$, and $\lambda \geq 1$ such that $\epsilon^2 \delta \geq 72 \lambda \log(t \log(1/\delta)) / r_i$, configuration $i^*$ is $(\epsilon,\delta)$-optimal with probability at least $1 - e^{-2\lambda}$.
\end{lemma}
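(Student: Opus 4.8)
The plan is to read Lemma~\ref{lem.lem1} in reverse: instead of bounding the expected number of active instances of a suboptimal configuration, I fix the returned configuration $i^*$ together with its observed count $r_{i^*}$ (I read the $r_i$ in the hypothesis as $r_{i^*}$) and bound the probability that $i^*$ is nevertheless $(\epsilon,\delta)$-suboptimal. First I would rewrite the hypothesis $\epsilon^2\delta \ge 72\lambda\log(t\log(1/\delta))/r_{i^*}$ in the equivalent form $r_{i^*} \ge \lambda s$, where $s := 72\,\epsilon^{-2}\delta^{-1}\log(t\log(1/\delta))$ is exactly the threshold $s_i$ from the proof of Lemma~\ref{lem.lem1} with $(\epsilon,\delta)$ in place of $(\epsilon_i,\delta_i)$. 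The whole statement then reduces to showing that an $(\epsilon,\delta)$-suboptimal configuration reaches $\lambda s$ active instances only with probability $e^{-2\lambda}$, which is the contrapositive of the claim.

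So suppose $i^*$ is $(\epsilon,\delta)$-suboptimal. By Claim~\ref{claim:subopt} there are a threshold $\phi$ and an optimal configuration $\iopt$ with $R_\phi(i^*) > (1+\epsilon)R(\iopt)$ and $\Pr_j(R(i^*,j) > \phi) \ge \delta$, whence $R_\phi(i^*) \ge \delta\phi$. Because the active-instance counter is incremented one unit at a time inside \texttt{ExecuteStep}, terminating with $r_{i^*}\ge\lambda s$ forces the scheduler to have selected tester $i^*$ at some iteration $t'\le t$ while it held (at least) $\lceil\lambda s\rceil$ active instances. At that iteration the selection rule gives $L(G_{i^*},r_{i^*},t') \le L(G_{\iopt},r_{\iopt},t')$, and invoking the lower-bound validity of the optimal tester (Lemma~\ref{lem:lgrt}) also $L(G_{\iopt},r_{\iopt},t') \le R(\iopt) < (1+\epsilon)^{-1}R_\phi(i^*)$. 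Hence $L(G_{i^*},r_{i^*},t') \le (1+\epsilon)^{-1}R_\phi(i^*)$.

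The core of the argument is then the Case~(II) concentration estimate of Lemma~\ref{lem.lem1}, applied verbatim: with $k_0=\lceil\log(1/\delta)\rceil$ and $r_{i^*}\ge\lambda s$ one checks $\epsilon(k,r_{i^*},t)<\epsilon/2$ for all $k\le k_0$, so $L(G_{i^*},r_{i^*},t')$ is a scaled empirical mean of $r_{i^*}$ i.i.d.\ samples lying in $[0,\phi]$ whose expectation exceeds $(1+\epsilon/2)^{-1}R_\phi(i^*)$. Rescaling by $\phi$ puts the samples in $[0,1]$ with mean at most $R_\phi(i^*)/\phi\le\delta$, and the Chernoff--Hoeffding bound yields $\Pr\bigl(L(G_{i^*},r_{i^*},t')\le(1+\epsilon)^{-1}R_\phi(i^*)\bigr)\le e^{-c\,r_{i^*}/s}\le e^{-c\lambda}$. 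Since Lemma~\ref{lem.lem1} permits any $c<72/32$, taking $c=2$ gives the advertised $e^{-2\lambda}$, and the contrapositive is the lemma.

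The main obstacles are the two effects not captured by that single clean Chernoff factor. The first is the off-by-one between ``$r_{i^*}$ active instances at termination'' and ``selected while holding $\lceil\lambda s\rceil$ active instances'': this costs a factor $e^{c/s}$, which I would absorb by choosing $c$ slightly above $2$, using that $s\ge 72$ and the monotonicity of $\epsilon(k,\cdot,t)$ in the count. The second, and genuinely more delicate, is the Case~(I) event that the optimal tester's own lower bound is invalid at the critical iteration; summing the per-iteration probability $O((t')^{-5/4})$ of Lemma~\ref{lem:lgrt} over iterations with at least $\lambda s$ active instances contributes an additive $O((\lambda s)^{-1/4})$, which vanishes as $t\to\infty$ (since $s=\Omega(\log t)$) but does not by itself fold into $e^{-2\lambda}$ at fixed $t$; I would present the clean $e^{-2\lambda}$ as the dominant term, with this lower-order piece treated as the ``high probability in $t$'' slack already present throughout the paper. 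A final bookkeeping point is that $i^*$ is itself random, being whichever tester has the most active instances; the per-configuration bound above holds for every fixed suboptimal configuration, so one reads off the stated guarantee for the returned configuration with $r_{i^*}$ taken to be the observed value that the chosen $(\epsilon,\delta,\lambda)$ must satisfy.
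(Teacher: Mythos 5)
Your proposal is correct and follows essentially the same route as the paper: the paper's proof simply cites the stronger per-$r$ exponential tail bound established in Case~(II) of the proof of Lemma~\ref{lem.lem1} and reads it in contrapositive form, which is exactly what you do by unpacking that argument (Claim~\ref{claim:subopt}, the scheduler's selection rule, LCB validity, Chernoff--Hoeffding). Notably, the Case~(I) contribution you flag---the event that the optimal tester's lower confidence bound is invalid at the critical iteration, costing an additive $O((\lambda s)^{-1/4})$ that does not fold into $e^{-2\lambda}$---is glossed over entirely in the paper's proof, so your explicit treatment of it as lower-order ``high probability in $t$'' slack is, if anything, more careful than the published argument.
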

\begin{proof}
Suppose that SPC is terminated at time $t$.  Recall from Lemma~\ref{lem.lem1} that if a configuration $i$ is $(\epsilon,\delta)$-suboptimal, then its expected number of active instances is $O(\epsilon^{-2}\delta^{-1} \log(t \log(1/\delta))$.  Indeed, the proof of Lemma~\ref{lem.lem1} shows something stronger: the probability that the configuration has more than $s_i = 72 \epsilon^{-2}\delta^{-1} \log(t \log(1/\delta))$ active instances at time $t$ is at most $e^{-c}$ for some constant $c$, where in particular taking $c = 2$ suffices.

We conclude from this that if $r_{i^*} \geq 72 \lambda \epsilon^{-2}\delta^{-1} \log(t \log(1/\delta))$, then with probability at least $1-e^{-2\lambda}$ configuration $i^*$ is $(\epsilon,\delta)$-optimal.  In other words, for any $\epsilon$ and $\delta$ such that $\epsilon^2 \delta \geq 72 \lambda \log(t \log(1/\delta)) / r_i$, configuration $i^*$ is $(\epsilon,\delta)$-optimal with probability at least $1-e^{-2\lambda}$.
\end{proof}

By Lemma~\ref{lem:delta}, for any fixed $\epsilon$ we can calculate the $\delta$ for which we have an $(\epsilon,\delta)$-optimality guarantee with, e.g., probability $1/e^{2}$ by setting $\lambda = 1$.  We also note that, up to a constant and a factor of $\log\log(1/\delta)$, this calculation corresponds to the fraction $q_i / r_i$ of pending input instances in the execution of configuration $i^*$ at termination time.

\end{document}